\documentclass{article} 
\usepackage{iclr2026_conference,times}

\usepackage{amsmath,amsfonts,bm}

\def\eqref#1{equation~\ref{#1}}

\def\1{\bm{1}}

\DeclareMathAlphabet{\mathsfit}{\encodingdefault}{\sfdefault}{m}{sl}
\SetMathAlphabet{\mathsfit}{bold}{\encodingdefault}{\sfdefault}{bx}{n}

\usepackage{algorithm}
\usepackage{algpseudocode}
\algrenewcommand\algorithmicrequire{\textbf{Input:}}

\usepackage{mathtools}
\usepackage{amsthm}
\newtheorem{theorem}{Theorem}
\newtheorem{lemma}{Lemma}

\newtheorem{proposition}{Proposition}

\newtheorem{corollary}{Corollary}

\newtheorem{assumption}{Assumption}

\renewcommand{\eqref}[1]{(\ref{#1})}

\theoremstyle{remark}
\newtheorem{remark}{Remark}

\usepackage{url}
\usepackage{hyperref}

\usepackage{color}

\title{Alternating Diffusion for Proximal Sampling with Zeroth Order Queries}

\author{
Hirohane Takagi$^{1,\dag,}$\thanks{This work was conducted as part of a research internship at Agency for Science, Technology and Research (A$\star$STAR), Singapore.}
,
Atsushi Nitanda$^{2,3,4,\ddag}$ \\
$^1$ Graduate School of Information Science and Technology, The University of Tokyo, Japan \\
$^2$ Institute of High Performance Computing, 
Agency for Science, Technology and Research, Singapore \\
$^3$ Centre for Frontier AI Research, 
Agency for Science, Technology and Research, Singapore \\
$^4$ College of Computing and Data Science, 
Nanyang Technological University, Singapore \\
$\dag$ \texttt{htakagi@is.s.u-tokyo.ac.jp} \\
$^\ddag$ \texttt{atsushi\_nitanda@a-star.edu.sg}
}

\iclrfinalcopy 
\begin{document}

\maketitle

\begin{abstract}
This work introduces a new approximate proximal sampler that operates solely with zeroth-order information of the potential function.  
Prior theoretical analyses have revealed that proximal sampling corresponds to alternating forward and backward iterations of the heat flow.  
The backward step was originally implemented by rejection sampling, whereas we directly simulate the dynamics.  
Unlike diffusion-based sampling methods that estimate scores via learned models or by invoking auxiliary samplers, our method treats the intermediate particle distribution as a Gaussian mixture, thereby yielding a Monte Carlo score estimator from directly samplable distributions.  
Theoretically, when the score estimation error is sufficiently controlled, our method inherits the exponential convergence of proximal sampling under isoperimetric conditions on the target distribution.  
In practice, the algorithm avoids rejection sampling, permits flexible step sizes, and runs with a deterministic runtime budget.  
Numerical experiments demonstrate that our approach converges rapidly to the target distribution, driven by interactions among multiple particles and by exploiting parallel computation.  
\end{abstract}

\section{Introduction}

Sampling from probability distributions $\pi(x)\propto e^{-f(x)}$ is a fundamental task in statistics and machine learning, with applications in Bayesian posterior inference and score-based generative modeling.
Methods like the Unadjusted Langevin Algorithm (ULA)~\citep{bj/1178291835,10.1214/16-AAP1238} and the Metropolis‐Adjusted Langevin Algorithm (MALA)~\citep{1c05a07d-e307-3702-9d37-5f716e8c0dbd,Roberts2002} are widely used.
Their convergence under strong convexity assumptions has been established in sharp nonasymptotic terms~\citep{pmlr-v65-dalalyan17a,pmlr-v75-dwivedi18a}, 
and it has further been shown that ULA enjoys exponential convergence in KL divergence under functional inequalities~\citep{pmlr-v83-cheng18a,NEURIPS2019_65a99bb7}.

Beyond Langevin-type approaches, there has been growing interest in alternative sampling schemes with nonasymptotic guarantees.  
Among them, \emph{proximal sampling}~\citep{liang2023a} introduces an auxiliary distribution close to the target—typically the Gaussian convolution of $\pi$~\citep{pmlr-v134-lee21a}—and alternates conditional updates between the target and auxiliary variables.  
From a theoretical perspective, each update can be interpreted as alternating a forward heat-flow step and a reverse denoising step, which yields exponential convergence under suitable functional-inequality assumptions on the target distributions~\citep{pmlr-v178-chen22c,NEURIPS2024_c4006ff5,wibisono2025mixing}.  

Despite this line of analysis, scalable implementations of proximal samplers remain challenging.  
Existing implementations rely on local optimization of $f$ combined with rejection sampling~\citep{10.5555/3586210.3586482,liang2023a,pmlr-v195-fan23a,liang2024proximal}, which necessitates small step sizes (i.e., weak convolution) to maintain acceptance and thus incurs many iterations and high overall cost.  
These bottlenecks have spurred diffusion-based Monte Carlo, which simulates denoising stochastic differential equations (SDEs)~\citep{huang2024reverse,pmlr-v247-huang24a,NEURIPS2024_8337b176}.  
This raises a natural question: \emph{can proximal sampling be implemented in its theoretical form and in a scalable way, directly through Gaussian convolutions and diffusion processes, without relying on rejection sampling?}

Another practical consideration is the efficient sampling of many particles in parallel.
In such settings, particle-based algorithms that introduce additional interaction terms or gradient-flow structures can promote faster mixing while maintaining diversity~\citep{NIPS2016_b3ba8f1b,pmlr-v119-futami20a,Boffi_2023,LU2024112859,ilin2025score}.
However, proximal sampling was originally formulated as a single-particle iterative framework, leaving a gap between its theoretical appeal and the practical demands of multi-particle sampling.

\subsection{Contributions}
In this work, we propose and analyze a diffusion-based approximation of proximal sampling.  
The key idea is to interpret the auxiliary samples as forming a Gaussian-mixture approximation and to exploit this structure for approximate time-dependent score estimation in the denoising dynamics.  
Our main contributions are as follows:
\begin{itemize}
    \item We introduce a new algorithm that serves as an approximate proximal sampler.  
    It is learning-free, gradient-free with respect to $f$, and rejection-free with a fixed runtime.  
    \item We extend the theory of proximal sampling to show that our diffusion-based approximation inherits comparable convergence rates under suitable assumptions.  
    This implies that our algorithm, which alternates perturbation and denoising, converges to the target distribution without requiring initialization from the Gaussian limit, unlike standard diffusion models.  
    \item We provide empirical evidence that our method converges rapidly to representative targets compared to existing implementations of proximal sampler,  
    and that its multi-particle extension improves both wall-clock efficiency and sample diversity.  
\end{itemize}

\begin{figure}
    \centering
    \includegraphics[width=0.9\linewidth]{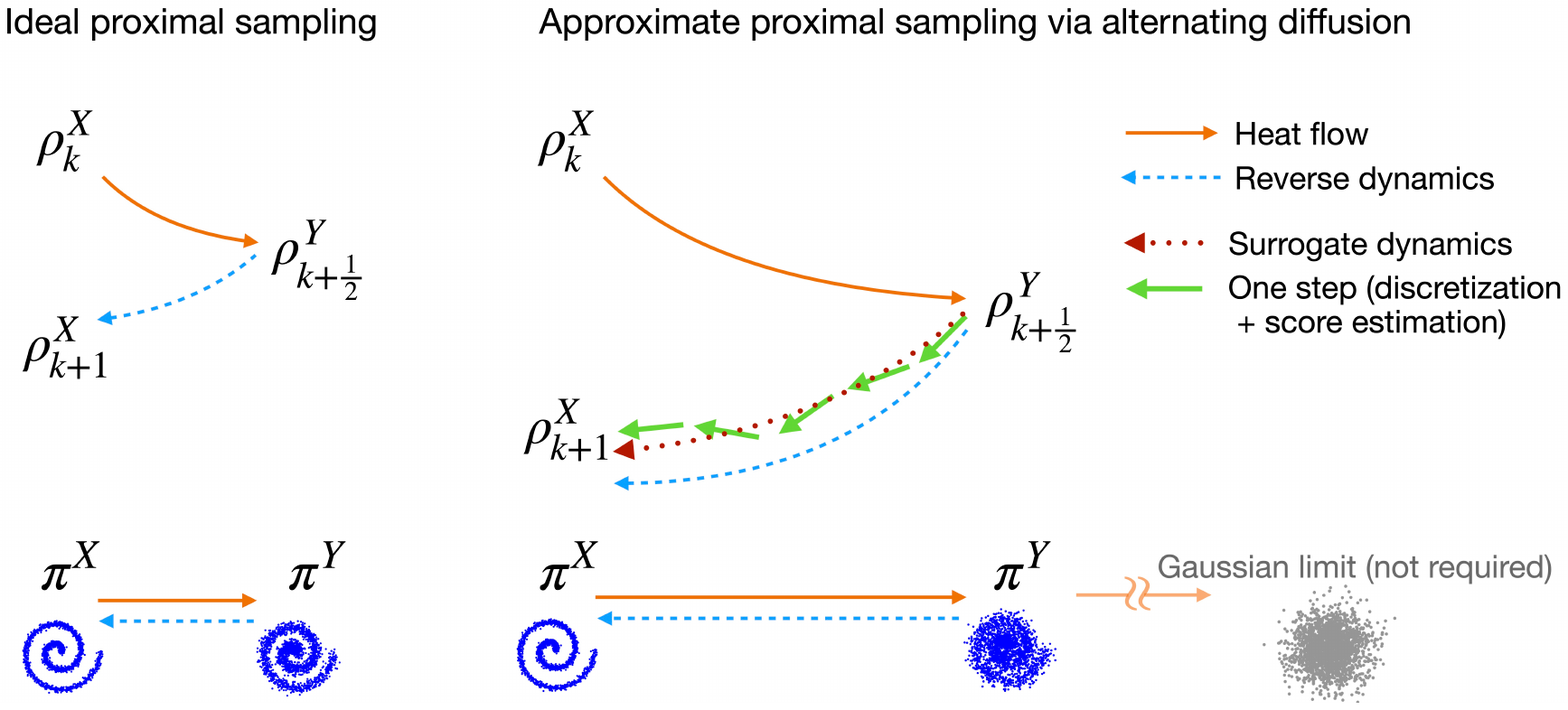}
    \caption{
Illustration of the ideal proximal sampling (left) and our approximation (right).
Heat flow and reverse dynamics are defined between $\pi^X$ and $\pi^Y$, but applied to intermediate $\rho$.
Although these do not reach their targets in one step, the ideal version attains exponential convergence. 
Compared to the rejection sampling-based implementation of proximal samplers, our approach allows for larger step sizes (i.e., stronger convolution), which reduce the iterations to reach the target distribution.
    }
    \label{fig:teaser}
\end{figure}

\section{Preliminaries}
\label{sec:preliminaries}
In this section, we provide a brief overview of proximal sampling, which underlies our proposed method, along with its convergence properties.

\subsection{Definitions}
Let $\mu$ and $\nu$ be two probability measures on $\mathbb{R}^d$ with $\mu \ll \nu$.
We define the Kullback--Leibler (KL) divergence, the R\'enyi divergence of order $q \geq 1$, and the relative Fisher information as
\begin{align*}
H_\nu(\mu) = \int \log \frac{d\mu}{d\nu} \, \mathrm{d}\mu, \quad
\mathcal{R}_{q,\nu}(\mu) = \frac{1}{q-1} \log \int \left( \frac{\mathrm{d}\mu}{\mathrm{d}\nu} \right)^q \mathrm{d}\nu, \quad
J_\nu(\mu) = \int \left\| \nabla \log \frac{\mathrm{d}\mu}{\mathrm{d}\nu} \right\|^2 \mathrm{d}\mu.
\end{align*}
Note that setting $q=1$ yields $\mathcal{R}_{1,\nu}(\mu) = H_\nu(\mu)$.
We say that $\nu$ satisfies a log-Sobolev inequality (LSI) with constant $C_{\mathrm{LSI}} > 0$ if $H_\nu(\mu) \leq \frac{1}{2}C_{\mathrm{LSI}} J_\nu(\mu)$.

\subsection{Proximal Sampling}
Let $f:\mathbb{R}^d \to \mathbb{R}$ be a potential function, and consider the target distribution $\pi^X(x) \propto \exp(-f(x))$.
For a given step size $h>0$, following \cite{pmlr-v134-lee21a}, we define a joint distribution on $\mathbb{R}^d \times \mathbb{R}^d$ as
\begin{align}
\pi^{X,Y}(x,y) \propto \exp\left(-f(x) - \frac{\|x-y\|^2}{2h}\right).
\end{align}
Marginalizing over $y$ yields
$\pi^X(x) \propto \int \pi^{X,Y}(x,y)\,\mathrm{d}y$.
On the other hand, marginalizing over $x$ defines a new distribution $\pi^Y$ as
$\pi^Y(y) \propto \int \exp\left(-f(x) - \frac{1}{2h}\|x-y\|^2\right) \mathrm{d}x$.
This corresponds to the Gaussian convolution $\pi^Y = \pi^X * \mathcal{N}(0,hI_d)$.

The proximal sampler with step size $h$ iteratively updates a current particle $x_k \in \mathbb{R}^d$ as follows:
\begin{align}
y_{k+\frac{1}{2}} & \ \sim \mathcal{N}(x_k,hI_d) 
\label{eq:step1}
\\
x_{k+1}  & \ \sim \pi^{X|Y=y_{k+\frac{1}{2}}}
\label{eq:step2}
\end{align}
where
$\pi^{X|Y=y_{k+1/2}}(x) \propto \exp (-f(x) - \frac{\|x-y_{k+1/2}\|^2}{2h} ).$
The restricted Gaussian oracle (RGO)~\citep{pmlr-v134-lee21a} assumes access to an oracle which enables sampling \eqref{eq:step2} from $\pi^{X|Y=y_{k+1/2}}(x)$.

Several works have implemented the RGO~\citep{10.5555/3586210.3586482,liang2023a,pmlr-v195-fan23a} through rejection sampling, achieving an expected complexity of $\tilde{O}(1)$ under regularity conditions on $f$.  
However, this requires choosing $h$ sufficiently small depending on $f$ and $d$, which in practice leads to many iterations, and the computational cost further fluctuates due to randomness.

\subsection{Connections to diffusion processes}

\cite{pmlr-v178-chen22c} established an improved convergence analysis of proximal sampling by interpreting each update through dynamics interpolated by an SDE, as illustrated in the left panel of \autoref{fig:teaser}.  
The forward step~\eqref{eq:step1} amounts to Gaussian convolution, which corresponds to evolving the heat equation or its associated SDE, for $t \in [0,h]$,
\begin{align}
\label{eq:heat-forward}
\mathrm{d} Z_t = \mathrm{d} B_t, 
\quad
\partial_t \mu_t = \tfrac{1}{2}\Delta \mu_t,
\end{align}
where $B_t$ is a standard Brownian motion.  
Writing $P_t$ for the Gaussian convolution kernel $\mathcal{N}(0,tI_d)$, we have $\mu_t = \mu_0 P_t$, and in particular $\pi^Y = \pi^X P_h$.  
The backward step~\eqref{eq:step2} can be viewed as the reverse operation of the forward step conditioned on $Z_h$. It is governed by the following SDE starting from $Z_0^\leftarrow = Z_h$ and the corresponding Fokker--Planck equation; for $t \in [0,h]$,
\begin{align}
\label{eq:heat-backward}
\mathrm{d} Z_t^\leftarrow = \nabla \log(\pi^X P_{h-t})(Z_t^\leftarrow)\, \mathrm{d} t + \mathrm{d} B_t^\leftarrow,
\quad
\partial_t \mu_t^\leftarrow 
= - \mathrm{div} \bigl(\mu_t^\leftarrow \nabla \log(\pi^X P_{h-t})\bigr) 
  + \tfrac{1}{2}\Delta \mu_t^\leftarrow ,
\end{align}
where $B_t^\leftarrow$ denotes the backward Brownian motion.  
Letting $Q_t$ denote its transition kernel, we obtain $\mu_h^\leftarrow = \mu_0^\leftarrow Q_h = \int \pi^{X|Y=y}(x)\,\mu_0^\leftarrow(y)\, \mathrm{d} y$. In particular, we have $\pi^X = \pi^Y Q_h$.

Hence the two steps in proximal sampling can be viewed as SDEs between $\pi^X$ and $\pi^Y$, with $P_t$ and $Q_t$ as their transition kernels.  
The proximal sampler applies the dynamics \eqref{eq:heat-forward} and \eqref{eq:heat-backward} with the start distributions $\rho^X_k \coloneqq \text{law}(x_k)$ and  $\rho^Y_{k+1/2} \coloneqq \text{law}(y_{k+1/2})$ at the $k$-th iteration, respectively, which then evolve through Gaussian convolution or conditional integration.
Although a single step does not reach $\pi^Y$ or $\pi^X$, iterating the forward--backward scheme leads to contraction towards the target distribution, as made precise in the following theorem.  

\begin{theorem}[\cite{pmlr-v178-chen22c}, Theorem~3]
\label{thm:proximal-contraction}
Assume that $\pi^X$ satisfies LSI with constant $C_\text{LSI}$.  
For any $h > 0$ and any initial distribution $\rho_0^X$, 
the $k$-th iterate $\rho_k^X$ of the proximal sampler with step size $h$ satisfies, for $q \ge 1$,
\begin{align}
\mathcal{R}_{q,\pi^X}(\rho_k^X) 
&\leq \frac{\mathcal{R}_{q,\pi^X}(\rho_0^X)}{(1 + h/C_\text{LSI})^{2k/q}}.
\end{align}
\end{theorem}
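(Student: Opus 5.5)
The plan is to follow the diffusion interpretation just described: bound the decay of $\mathcal{R}_{q}$ separately along the forward heat flow \eqref{eq:heat-forward} and along the backward SDE \eqref{eq:heat-backward}, each giving a factor $(1+h/C_{\mathrm{LSI}})^{-1/q}$. Composing the two factors per iteration and then iterating over $k$ gives the claimed rate $(1+h/C_{\mathrm{LSI}})^{-2k/q}$.

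\textbf{Forward step.} Let $\mu_t=\rho_k^X P_t$ and $\pi_t=\pi^X P_t$, both evolving under the heat equation. Differentiating $F_q(\mu_t)=\int (\mu_t/\pi_t)^q\,\mathrm{d}\pi_t$ and integrating by parts should give the standard simultaneous-heat-flow identity
\begin{align*}
\frac{\mathrm{d}}{\mathrm{d}t}\mathcal{R}_{q,\pi_t}(\mu_t) = -\frac{q}{2}\,\frac{G_{q,\pi_t}(\mu_t)}{F_{q,\pi_t}(\mu_t)},
\end{align*}
where $G_q$ is the R\'enyi analogue of the Fisher information, $G_q=\int (\mu/\pi)^{q}\|\nabla\log(\mu/\pi)\|^2\,\mathrm{d}\mu$ up to the usual normalization. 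Since $\pi^X$ satisfies LSI with $C_{\mathrm{LSI}}$, the Gaussian convolution $\pi_t$ satisfies LSI with constant $C_{\mathrm{LSI}}+t$ (LSI is subadditive under convolution). The R\'enyi form of LSI then gives $G_q/F_q\ge \frac{2}{q^2(C_{\mathrm{LSI}}+t)}\mathcal{R}_q$. Hence $\frac{\mathrm{d}}{\mathrm{d}t}\mathcal{R}_q\le -\frac{1}{q(C_{\mathrm{LSI}}+t)}\mathcal{R}_q$, and integrating over $[0,h]$ yields $\mathcal{R}_{q,\pi^Y}(\rho^Y_{k+1/2})\le (1+h/C_{\mathrm{LSI}})^{-1/q}\mathcal{R}_{q,\pi^X}(\rho^X_k)$.

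\textbf{Backward step.} Run \eqref{eq:heat-backward} started from $\rho^Y_{k+1/2}$ and compare it with the same dynamics started from $\pi^Y$. The latter is the exact time reversal, so its marginals are $\pi^X P_{h-t}$. Both densities solve the same Fokker--Planck equation with drift $\nabla\log\pi_{h-t}$, and the drift term cancels in the relative-density computation. The same dissipation identity therefore holds with reference measure $\pi_{h-t}$, which satisfies LSI with constant $C_{\mathrm{LSI}}+h-t$. Integrating over $t\in[0,h]$ gives a second factor $(1+h/C_{\mathrm{LSI}})^{-1/q}$, and we land at $\rho^X_{k+1}=\rho^Y_{k+1/2}Q_h$ relative to $\pi^Y Q_h=\pi^X$.

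\textbf{Main obstacle.} The key lemma is the R\'enyi dissipation inequality under LSI. Applying LSI to the test measure $\propto(\mu/\pi)^q\pi$ gives a lower bound of $G_q/F_q$ by its KL-type entropy. That entropy must in turn be bounded below by $\mathcal{R}_q$, using monotonicity of $q\mapsto\mathcal{R}_q$, namely $\frac{\mathrm{d}}{\mathrm{d}q}\big((q-1)\mathcal{R}_q\big)\ge\mathcal{R}_q$. Tracking the constants so that the exponent comes out exactly $1/q$ per half-step is the delicate part. I would also justify the integrations by parts via smoothness of Gaussian convolutions, approximating non-smooth initializations if necessary. The case $q=1$ (KL) follows from the same argument, or as the limit $q\to1$.
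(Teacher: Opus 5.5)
Your proposal is correct and follows the same route as the paper. The paper obtains the theorem by composing the forward and backward R\'enyi contractions of Lemma~\ref{lem:forward-backward}, each a factor $(1+h/C_\text{LSI})^{-1/q}$, and iterating over $k$; those contractions are established in \cite{pmlr-v178-chen22c} through exactly the dissipation identity you use (heat flow forward, reverse SDE backward with cancelling drift) together with the R\'enyi form of LSI along the reference path $\pi^X P_t$. The only slip is notational: $G_q$ should be $\int (\mu/\pi)^{q}\|\nabla\log(\mu/\pi)\|^2\,\mathrm{d}\pi$, integrated against $\pi$ rather than $\mu$, which is what makes the Fisher information of the tilted measure $\propto(\mu/\pi)^q\pi$ equal to $q^2 G_q/F_q$ in your LSI step.
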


\autoref{thm:proximal-contraction} provides the exponential convergence guarantee of proximal sampling.  
Its proof follows by combining the forward and backward contraction properties at each step stated in the next lemma.

\begin{lemma}[\cite{pmlr-v178-chen22c}, Appendix~A.4]
\label{lem:forward-backward}
Assume that $\pi^X$ satisfies LSI with constant $C_\text{LSI}$ and $\rho \ll \pi^X$, $\rho' \ll \pi^Y$.  
For $q \ge 1$, we have
\begin{align}
\mathcal{R}_{q,\pi^Y}(\rho P_h)
=
\mathcal{R}_{q,\pi^X P_h}(\rho P_h)
\le 
\frac{\mathcal{R}_{q,\pi^X}(\rho)}{(1+h/C_\text{LSI})^{1/q}},
\label{eq:forward-contraction} 
\\
\mathcal{R}_{q,\pi^X}(\rho' Q_h)
=
\mathcal{R}_{q,\pi^Y Q_h}(\rho' Q_h)
\le 
\frac{\mathcal{R}_{q,\pi^Y}(\rho')}{(1+h/C_\text{LSI})^{1/q}}.
\label{eq:backward-contraction}
\end{align}
\end{lemma}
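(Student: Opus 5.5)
The equalities are immediate: $\pi^Y=\pi^XP_h$ by definition, and $\pi^X=\pi^YQ_h$ as noted after \eqref{eq:heat-backward}. For each inequality I will run a continuous-time argument along the relevant SDE and show that the Rényi divergence decays at a controlled rate. This uses two ingredients. The first is the simultaneous evolution of the law and of the reference measure under the same dynamics. The second is a log-Sobolev inequality for every intermediate reference measure.

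\textbf{Forward step.} Set $\rho_t=\rho P_t$ and $\pi_t=\pi^XP_t$ for $t\in[0,h]$. Both solve the heat equation $\partial_t\mu=\frac12\Delta\mu$. With $r_t=\rho_t/\pi_t$, a standard computation gives
\begin{align*}
\frac{\mathrm d}{\mathrm dt}\int r_t^q\,\mathrm d\pi_t=-\frac{q(q-1)}{2}\int r_t^{q-2}\|\nabla r_t\|^2\,\mathrm d\pi_t .
\end{align*}
The right-hand side is $-\frac{2(q-1)}{q}\int\|\nabla r_t^{q/2}\|^2\,\mathrm d\pi_t$. This is the Rényi Fisher information $G_{q,\pi_t}$ up to normalization.

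Next I invoke the known fact that Gaussian convolution preserves LSI with additive constant: $\pi_t$ satisfies LSI with constant $C_{\mathrm{LSI}}+t$. The LSI applied to $r^{q/2}$, as in Vempala--Wibisono, then yields
\begin{align*}
\frac{\mathrm d}{\mathrm dt}\mathcal R_{q,\pi_t}(\rho_t)\le -\frac{\mathcal R_{q,\pi_t}(\rho_t)}{q(C_{\mathrm{LSI}}+t)}.
\end{align*}
This step uses the inequality $\frac{\mathcal R_q}{q}\le\frac{C}{2}\cdot\frac{G_q}{F_q}$, with $F_q=\int r^q\,\mathrm d\pi$. Integrating gives
\begin{align*}
\mathcal R_{q,\pi_h}(\rho_h)\le\mathcal R_{q,\pi^X}(\rho)\exp\Bigl(-\frac1q\log\frac{C_{\mathrm{LSI}}+h}{C_{\mathrm{LSI}}}\Bigr),
\end{align*}
which is exactly $(1+h/C_{\mathrm{LSI}})^{-1/q}$. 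The case $q=1$ is the same argument with KL divergence and relative Fisher information.

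\textbf{Backward step.} Let $\rho'_t=\rho'Q_t$ and $\pi'_t=\pi^YQ_t=\pi^XP_{h-t}$. Both evolve by the Fokker--Planck equation in \eqref{eq:heat-backward}, so they share the same drift $b_t=\nabla\log\pi'_t$. The drift terms cancel in the time derivative of $\int r_t^q\,\mathrm d\pi'_t$, because both densities are transported by the same velocity field. The same dissipation identity therefore holds.

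The reference $\pi'_t=\pi^XP_{h-t}$ satisfies LSI with constant $C_{\mathrm{LSI}}+h-t$. This gives
\begin{align*}
\frac{\mathrm d}{\mathrm dt}\mathcal R_q\le-\frac{\mathcal R_q}{q(C_{\mathrm{LSI}}+h-t)},
\end{align*}
and integrating over $[0,h]$ produces the same factor $(1+h/C_{\mathrm{LSI}})^{-1/q}$.

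\textbf{Main obstacle.} The delicate points are the rigorous justification of the dissipation identities and the verification that the drift terms cancel in the backward case, where the drift is singular near $t=h$. I would handle these by working on $[0,h-\epsilon]$, assuming enough regularity (finite divergence and smooth positive densities after convolution), and letting $\epsilon\to0$ using lower semicontinuity of $\mathcal R_q$. A cleaner alternative for the backward step is the data-processing view in Chen et al. However, data processing alone only gives non-expansion, so the time-dependent LSI argument above is what delivers the contraction rate.
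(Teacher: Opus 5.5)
Your proposal is correct and is essentially the argument of Chen et al.\ (Appendix~A.4) that the paper cites. In both directions the law and the reference solve the same Fokker--Planck equation, so the drift cancels and $\frac{\mathrm d}{\mathrm dt}\mathcal R_q=-\frac q2\,G_q/F_q$ with $G_q=\int r^{q-2}\|\nabla r\|^2\,\mathrm d\pi$; the LSI for $\pi^XP_s$ with constant $C_{\mathrm{LSI}}+s$ and Gr\"onwall then give the factor $(1+h/C_{\mathrm{LSI}})^{-1/q}$, and the only thing to tidy is your LSI step, which with this $G_q$ reads $\mathcal R_q\le\frac{C_{\mathrm{LSI}}q^2}{2}\,G_q/F_q$ and yields exactly your rate $-\mathcal R_q/(q(C_{\mathrm{LSI}}+t))$.
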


In this paper, we aim to realize the proximal sampler by directly simulating the associated SDEs.
In the forward direction, convolution with $\pi^X$ can be simulated exactly by injecting Gaussian noise, so the bound~\eqref{eq:forward-contraction} is directly applicable.  
In contrast, the backward dynamics~\eqref{eq:heat-backward} cannot be simulated exactly, and our surrogate version of \eqref{eq:heat-backward} will provide an approximation in place of~\eqref{eq:backward-contraction}.

\section{Approximate Multi-particle Proximal Sampling}

Building on the SDE interpretation in \autoref{sec:preliminaries}, we propose to approximate the backward dynamics by replacing $\pi^X$ with a surrogate distribution constructed from the current particles.  
This enables Monte Carlo estimation of the score function using only evaluations of $f$, without requiring gradient or rejection sampling.  
At a high level, each iteration of our diffusion-based proximal sampler consists of (i) evolving particles by the forward heat flow \eqref{eq:heat-forward} and (ii) applying the surrogate dynamics, which time-discretizes and approximates the reverse flow \eqref{eq:heat-backward}.

\subsection{Score Estimation for Surrogate Dynamics}

We derive the update rule in the backward step, given the particles at the $k$-th iteration: $X_k = \{x_i\}_{i=1}^N$ and $Y_{k+1/2} = \{y_j\}_{j=1}^N$.  
We approximate the backward dynamics~\eqref{eq:heat-backward} by replacing $\pi^X$ with a surrogate distribution $\hat q_{k+1}(\cdot \mid Y_{k+1/2}, X_k)$ constructed from these particles, as follows: 
\begin{align}
\hat{q}_{k+1}(x \mid Y_{k+1/2}, X_k)
\propto
\frac{1}{N} \sum_{j=1}^N \pi^{X|Y=y_j}(x)
\frac{\pi^Y(y_j)}{\hat q_{k+1/2}(y_j \mid X_k)},
\label{eq:qhat-def}
\end{align}
where $\hat q_{k+1/2}(y\mid X_k) = \tfrac{1}{N}\sum_i \mathcal{N}(y; x_i, h I_d)$.  
The conditional distribution with the reweighting term ensures that, as $N \to \infty$, sampling $y \sim \hat q_{k+1/2}$ recovers the target distribution $\pi^X$.  
By substituting the explicit form of $\pi^{X|Y=y_j}$ and $\pi^Y$ we obtain
\begin{align}
\hat{q}_{k+1}(x \mid Y_{k+1/2}, X_k)
&\propto 
\frac{1}{N}\sum_{j=1}^N 
\frac{\exp \left(-f(x)-\frac{1}{2h}\|x-y_j\|^2\right)}{\hat q_{k+1/2}(y_j \mid X_k)}
\nonumber\\
&\propto 
\frac{1}{N}\sum_{j=1}^N 
\frac{\mathcal{N}(x; y_j, h I_d)}{\hat q_{k+1/2}(y_j \mid X_k)}
\exp(-f(x))
\eqqcolon g^{k+1/2}_N(x)\exp(-f(x)),
\label{eq:qhat-mixture}
\end{align}
where $g^{k+1/2}_N$ is the unnormalized density of an $N$-component weighted Gaussian mixture 
(see Appendix~\ref{appendix:score-derivation} for the full derivation).  
$\hat q_{k+1}$ involves an inverse reweighting with respect to a Gaussian mixture $\hat{q}_{k+1/2}$.
This reduces weights when $y_j$ is located in areas where $X_k$ is overly concentrated, while amplifying their importance in sparse regions.
The resulting term, derived from the empirical particle system, may help prevent particle collapse and promote exploration.  
We later verify this effect in our experiments.

The surrogate score is defined as
$\hat s_t(Z_t^\leftarrow) \coloneqq \nabla \log\!\left(\hat q_{k+1} P_{h-t}\right)(Z_t^\leftarrow)$,
which serves as the drift term in the surrogate reverse dynamics, replacing $\nabla \log(\pi^X P_{h-t})(Z_t^\leftarrow)$ in the backward dynamics \eqref{eq:heat-backward}.  
This expression can be rewritten in expectation form using Bayes’ rule:
\begin{align}
&\hat{s}_t (z)
= \mathbb{E}_{g_N^{k+1/2}(x_0 \mid z)}
\left[
    \frac{x_0 - z}{\sigma_t^2} 
    e^{- f(x_0)} / C_t^k
\right],
\label{eq:surrogate-drift}
\\
&\text{where} \quad
\sigma_t^2=h-t, \quad
C_t = \int g_N^{k+1/2}(x_0\mid z) e^{- f(x_0)} \, \mathrm{d} x_0.
\end{align}
See \autoref{appendix:score-derivation} for the derivation.
Since $g_N^{k+1/2}(x_0)$ is a Gaussian mixture and the conditional law $z \mid x_0$ under the perturbation kernel is $\mathcal{N}(x_0, \sigma_t^2 I_d)$, Bayes’ rule implies that the posterior distribution $g_N^{k+1/2}(x_0\mid z)$ is itself a Gaussian mixture:
\begin{align}
g_N^{k+1/2}(x_0|z) &\propto \sum_{j=1}^N w_j(z) \, \mathcal{N}(x_0; m_j(z), \bar{\sigma}^2 I),
\label{eq:posterior-dist}
\\
\text{where} \ \ 
\bar{\sigma}^2 = (h^{-1} + \sigma_t^{-2})^{-1},
\ 
m_j(z) &= \bar{\sigma}^2(h^{-1} y_j + \sigma_t^{-2} z),
\ 
w_j(z)
=
\frac{\mathcal{N}(z; y_j, (h+\sigma_t^2) I_d)}{\hat{q}_{k+1/2}(y_j\mid X_k)}.
\label{eq:posterior-params}
\end{align}
The detailed derivation can be found in \autoref{posterior-derivation}.
This formulation enables practical score estimation via Monte Carlo sampling from the Gaussian mixture without requiring model training.

\begin{algorithm}[t]
\caption{Zeroth-Order Diffusive Proximal Sampler}
\label{alg:zodps}
\begin{algorithmic}[1]
\Require
potential function $f\!:\!\mathbb{R}^d \to \mathbb{R}$,
initial samples $\{x_0^{(i)}\}_{i=1}^N$
step size $h$,
iterations $K$,
diffusion steps $T$,
number of interim samples $M$,
noise schedule $\{\sigma_t^2\}_{t=0}^{T}$ with $\sigma_T^2 = h$ and $\sigma_0^2 = \sigma_{\min}^2$.
\vspace{5pt}
\State \Comment{All operations for $i=1,\dots,N$, $j=1,\dots,N$, and $l=1,\dots,M$ are evaluated in parallel.}
\For{$k=0,1,\dots,K-1$}
    \Statex \hspace{\algorithmicindent}
    \# \texttt{Step 1: Forward heat flow \eqref{eq:heat-forward}}
    \State $y_{k+\frac{1}{2}}^{(j)} \gets x_k^{(j)} + \sqrt{h}\,\xi_k^{(j)}$ with $\xi_k^{(j)} \overset{\text{i.i.d.}}{\sim} \mathcal{N}(0,I_d)$.
    \State Initialize $z_T^{(i)} \gets x_k^{(i)} + \sqrt{h}\,\xi'^{(i)}_k$ with $\xi'^{(i)}_k \overset{\text{i.i.d.}}{\sim} \mathcal{N}(0,I_d)$.
    \Statex \hspace{\algorithmicindent}
    \# \texttt{Step 2: Surrogate version of reverse dynamics \eqref{eq:heat-backward}}
    \For{$t=T,T-1,\dots,1$}
        \State Set $\Delta t \gets \sigma_t^2 - \sigma_{t-1}^2$.
        \State Compute $\bar{\sigma}, m_{i,j} \coloneqq m_j(z_t^{(i)}), w_{i,j} \coloneqq w_j(z_t^{(i)})$ by \eqref{eq:posterior-params} and set $w_{i,j} \gets w_{i,j} / \sum_j w_{i,j}.$
        \State Draw $M$ samples $z^{(i,l)}_0 \overset{\text{i.i.d.}}{\sim} \sum_j w_{i,j} \mathcal{N}(m_{i,j}, \bar{\sigma}^2)$.
        \State Compute $c_{i,l} \coloneqq \exp\!\big(-f(z^{(i,l)}_0)\big)$ and set $c_{i,l} \gets c_{i,l} / \sum_l c_{i,l}$.
        \Statex \hspace{\algorithmicindent} \hspace{\algorithmicindent}
        \# \texttt{Update each particle using the Euler-Maruyama step}
        \State $z_{t-1}^{(i)} \gets z_t^{(i)} + \Delta t\sum_l c_{i,l}\big(z^{(i,l)}_0 - z_t^{(i)}\big) / \sigma_t^2 + \sqrt{\Delta t}\,\xi''^{(i)}_k$ with $\xi''^{(i)}_k \overset{\text{i.i.d.}}{\sim} \mathcal{N}(0,I_d)$.
    \EndFor
    \State Set $x_{k+1}^{(i)} \gets z_0^{(i)}$.
\EndFor
\State \Return $\{x_K^{(i)}\}_{i=1}^N$
\end{algorithmic}
\end{algorithm}

\subsection{Algorithm and Complexity}

The surrogate score~\eqref{eq:surrogate-drift} involves an expectation with respect to the Gaussian-mixture posterior~\eqref{eq:posterior-dist}.  
In practice, we approximate this expectation using Monte Carlo estimation by drawing $M$ samples for each particle.  
Combining the forward step (Gaussian perturbation) and the discretized surrogate reverse dynamics with $T$ denoising steps, we obtain the iterative sampling procedure that approximates the proximal sampling. It is worth emphasizing that the proposed method relies only on a zeroth-order oracle of $f$ and does not require gradient information, unlike gradient-based sampling methods such as Langevin Monte Carlo. A complete description of the algorithm is provided in Algorithm~\ref{alg:zodps}.  

Regarding computational cost, a straightforward implementation would require $KTMN$ evaluations of the potential $f$ for $K$ outer iterations.  
This complexity can be further reduced to $KT$ under the assumption of a parallel oracle that can evaluate $f$ simultaneously on multiple samples. Such a design naturally exploits parallel computation, making the method efficient in modern computing environments.

\section{Theoretical Analysis}

In this section, we provide a theoretical analysis that essentially quantifies the deviation from the ideal proximal sampling caused by time and space discretization.
Our method splits the macro step size $h$ into $T$ substeps of length $\eta=h/T$ for the backward dynamics.
For each $\ell\in\{1,\dots,T\}$, let $t_\ell\coloneqq(\ell-1)\eta$ be the substep start time, and let $\mu_{t_\ell}$ denote the current law at time $t_\ell$. 
The following lemma establishes the contraction of the KL divergence in a single outer iteration of Algorithm~\ref{alg:zodps}, up to an error due to time discretization.
We assume that $\pi^X$ satisfies an LSI with constant $C_\text{LSI}$.
Focusing on iteration $k$, we write $\rho^X_k \coloneqq \text{law}(x_k^{(i)})$ and  $\rho^Y_{k+1/2} \coloneqq \text{law}(y_{k+1/2}^{(j)})$.
A complete description of the assumptions and proofs is deferred to \autoref{sec:appendix-a}.

\begin{lemma}[Discretization-only one-step bound]
\label{lem:disc-only}
Assume $N,M\to\infty$, so the backward update uses the exact score.
Suppose further that within each substep $\ell$, the temporal variation of the score between $t_\ell$ and $t_\ell + t \ (t\in[0,\eta])$ integrating over $\mu_{t_\ell+t}$ is bounded by $L_{\ell,t}^2 t^2$, the score of the reference measure $\nu_{t_\ell}$ is Lipschitz in space with constant $L_{\nu,\ell}$, and there exists a uniform entropy bound $\bar H^{(k)}>0$ along the substep path.
Then for $u \ge 1$,
\begin{align*}
H_{\pi^X}(\rho^X_{k+1})
&\le
\frac{H_{\pi^Y}(\rho^Y_{k+1/2})}{(1+h/C_{LSI})^{1-1/(2u^2)}}
+\alpha_u\,\Lambda_1^{(k)},
\\
\text{where}\quad
\alpha_u
&\coloneqq
2 u^4 
C_{LSI}\!
\left((1+h/C_{LSI})^{1/(2u^2)}-1\right), \\
\Lambda_1^{(k)}
&\coloneqq
4\eta^2 L_{\nu,*}^4(C_{LSI}+h)\bar H^{(k)}+\eta \, d \, C^{(k)}, \\
C^{(k)}
&\coloneqq
2\eta L_{\nu,*}^3+L_{\nu,*}^2+\eta L_s^2/d, \ 
\text{with}\  L_{\nu,*}\coloneqq\sup_\ell L_{\nu,\ell} \ \text{and} \  L_s\coloneqq \sup_{\ell,t} L_{\ell,t}.
\end{align*}
\end{lemma}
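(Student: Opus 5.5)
We follow the continuous-time interpolation argument of \cite{pmlr-v178-chen22c} for the backward step, with the Euler--Maruyama discretization treated as a perturbation in the style of the ULA analysis of \cite{NEURIPS2019_65a99bb7}. On each substep $\ell$ we write the discretized process as an SDE on $[t_\ell,t_\ell+\eta]$ whose drift is frozen at $Z_{t_\ell}$, i.e.\ $b(Z_{t_\ell})=\nabla\log(\pi^X P_{h-t_\ell})(Z_{t_\ell})$. The reference flow is $\nu_t\coloneqq \pi^Y Q_t=\pi^X P_{h-t}$, so $\nu_0=\pi^Y$ and $\nu_h=\pi^X$. Let $\mu_t$ denote the law of the algorithm at time $t$. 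The aim is a differential inequality for $H_{\nu_t}(\mu_t)$, integrated substep by substep.

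\textbf{Step 1 (differential inequality).} Averaging the frozen drift over $Z_{t_\ell}$ given $Z_t$, the de Bruijn-type computation gives
\begin{align*}
\frac{d}{dt}H_{\nu_t}(\mu_t)\le -\tfrac12 J_{\nu_t}(\mu_t)+\mathbb{E}\bigl[\|b(Z_{t_\ell})-\nabla\log\nu_t(Z_t)\|^2\bigr].
\end{align*}
Here $-J$ comes from the exact reverse heat flow, and a Young inequality splits off half of $J$. By the LSI for $\nu_t$, with constant $C_t=C_{LSI}+h-t\le C_{LSI}+h$ inherited under Gaussian convolution, the first term is at most $-H_{\nu_t}(\mu_t)/C_t$.

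\textbf{Step 2 (error term).} We split the error as
\begin{align*}
b(Z_{t_\ell})-\nabla\log\nu_t(Z_t)=\bigl[\nabla\log\nu_{t_\ell}(Z_{t_\ell})-\nabla\log\nu_{t_\ell}(Z_t)\bigr]+\bigl[\nabla\log\nu_{t_\ell}(Z_t)-\nabla\log\nu_t(Z_t)\bigr].
\end{align*}
The second bracket is the assumed temporal variation, bounded by $L_{\ell,t}^2t^2\le L_s^2\eta^2$. The first is at most $L_{\nu,\ell}^2\mathbb{E}\|Z_t-Z_{t_\ell}\|^2$. Using the SDE increment, this is at most $L^2(t^2\mathbb{E}\|b\|^2+td)$, which gives the $\eta d L^2$ term. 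Following Vempala--Wibisono, $\mathbb{E}_{\mu}\|\nabla\log\nu\|^2$ is controlled by $dL+$ a term in $L^2 W_2^2(\mu,\nu)$. Talagrand's inequality with constant $C_{LSI}+h$ then converts $W_2^2$ into at most $2(C_{LSI}+h)H$, and $H$ is bounded by $\bar H^{(k)}$. Collecting terms yields an additive error of order $\eta\Lambda_1^{(k)}$, with $\Lambda_1^{(k)}$ as stated: $4\eta^2L^4(C+h)\bar H$, $2\eta^2 dL^3$, $\eta dL^2$, and $\eta^2L_s^2$.

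\textbf{Step 3 (integration).} We want to recover the contraction factor $(1+h/C_{LSI})^{-1}$ of Lemma~\ref{lem:forward-backward}, which Chen et al.\ obtain via $C_t=C_{LSI}+h-t$. Two choices must be weighed:
\begin{itemize}
\item Sacrificing part of the $J$ term, weighted by $1/(2u^2)$ through a Young inequality with parameter $u$, explains both the weakened exponent $1-1/(2u^2)$ and the constant $u^4$.
\item Gronwall with rate $(1-1/(2u^2))/(C_{LSI}+h-t)$ gives the multiplier $\bigl((C_{LSI}+h-t)/(C_{LSI}+h)\bigr)^{\cdots}$, and integrating it against the constant error yields $\alpha_u\Lambda_1^{(k)}$ with $\alpha_u\propto C_{LSI}\bigl((1+h/C_{LSI})^{1/(2u^2)}-1\bigr)$.
\end{itemize}
With $\mu_0=\rho^Y_{k+1/2}$ and $\mu_h=\rho^X_{k+1}$, this gives the claim.

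The main obstacle is Step 3's bookkeeping: matching the exact exponents and the constant $\alpha_u$, and making sure the spatial-error term proportional to $H$ is absorbed via $\bar H^{(k)}$ without destroying the contraction rate.
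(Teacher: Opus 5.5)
Your argument is essentially the paper's: the same interpolation computation in the style of Vempala--Wibisono, and the same split of the drift error into two parts. The spatial part is bounded via the Euler--Maruyama increment, Lemma~12 of Vempala--Wibisono, Talagrand's inequality and $\bar H^{(k)}$; the temporal part is handled by the assumption. You then use the LSI constant $C_{LSI}+h-t$ for $\pi^X P_{h-t}$ and Gronwall with integrating factor $(C_{LSI}+h-t)^{-1+1/(2u^2)}$, telescoped over the $T$ substeps.

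The one thing to fix is Step~1. Since the generator is $\tfrac12\Delta$, the exact reverse flow dissipates only $\tfrac12 J_{\nu_t}(\mu_t)$, not $J_{\nu_t}(\mu_t)$, so your displayed inequality with $-\tfrac12 J_{\nu_t}(\mu_t)$ and unit weight on the error does not follow (it would even give the full exponent $1$). The cross term must be handled from the outset by $\langle a,b\rangle\le u^2\|a\|^2+\tfrac{1}{4u^2}\|b\|^2$, which yields $-(\tfrac12-\tfrac{1}{4u^2})J_{\nu_t}(\mu_t)+u^2\,\mathbb{E}\|b(Z_{t_\ell})-\nabla\log\nu_t(Z_t)\|^2$. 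This is precisely the form your Step~3 and the paper's Lemma~\ref{lem:discretization} use.
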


The additional discretization error term in Lemma~\ref{lem:disc-only} scales as $O(h/T)$ with respect to the number of substeps $T$.
Therefore, if $T$ is chosen appropriately, the backward step essentially inherits the convergence rate of the ideal proximal sampler in~\eqref{eq:backward-contraction}.

When $N,M<\infty$, we assume that at each substep start $t_\ell$ the score estimator admits the bound
$
\Lambda_2^{(k)}
\coloneqq
2\sup_{\ell=1,\dots,T}\;
\mathbb{E}_{\mu_{t_\ell}}[
\|
\hat{s}^{(k)}_{N,M}(\cdot,t_\ell)-s^{(k)}_{t_\ell}(\cdot)
\|^2],
$
where $s_{t_\ell}$ is the true score at $t_\ell$ and $\hat{s}^{(k)}_{N,M}(\cdot,t_\ell)$ is the Monte Carlo estimator. 
The following result incorporates this condition.

\begin{proposition}[Main one-step bound with split errors]
\label{prop:main-split}
Let $u\ge 1$.
Under the assumptions of \autoref{lem:disc-only} with finite $N, M$, and the score estimation bound,
the $k$-th iterate of Algorithm~\ref{alg:zodps} satisfies
\begin{align*}
H_{\pi^X}(\rho^X_{k+1})
\le
\frac{H_{\pi^Y}(\rho^Y_{k+1/2})}{(1+h/C_{LSI})^{1-1/(2u^2)}}
+
\alpha_u
(\Lambda_1^{(k)}+\Lambda_2^{(k)}).
\end{align*}
\end{proposition}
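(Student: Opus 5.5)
The plan is to repeat the proof of Lemma~\ref{lem:disc-only} almost word for word, changing only the per-substep error decomposition so that the drift discrepancy is split into a score-estimation part and a time-discretization part. Fix the outer iteration $k$ and a substep $\ell$. Let $\mu_{t_\ell+t}$, $t\in[0,\eta]$, be the law of the algorithm's interpolated process. On the substep this process follows $\mathrm{d}Z = \hat s^{(k)}_{N,M}(Z_{t_\ell},t_\ell)\,\mathrm{d}t + \mathrm{d}B$ with the drift frozen. Compare it with the reference law $\nu_{t_\ell+t}=\pi^Y Q_{t_\ell+t}$ of the exact backward dynamics~\eqref{eq:heat-backward}. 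Differentiating $H_{\nu_{t_\ell+t}}(\mu_{t_\ell+t})$ in $t$ gives the usual interpolation identity
\begin{align*}
\frac{\mathrm{d}}{\mathrm{d}t}H_{\nu}(\mu)
= -\tfrac12 J_{\nu}(\mu)
+ \mathbb{E}\Bigl[\bigl\langle \hat s^{(k)}_{N,M}(Z_{t_\ell},t_\ell)-s_{t_\ell+t}(Z_{t_\ell+t}),\,\nabla\log\tfrac{\mu}{\nu}(Z_{t_\ell+t})\bigr\rangle\Bigr].
\end{align*}
Applying Young's inequality leaves $-\tfrac14 J_\nu(\mu)$ plus the mean-square drift error. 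I then split that error with $\|a+b+c\|^2\le$ constant times the sum of squares, using three pieces:
\begin{itemize}
\item the estimation error $\hat s^{(k)}_{N,M}(\cdot,t_\ell)-s_{t_\ell}(\cdot)$ evaluated at $Z_{t_\ell}$;
\item the spatial increment $s_{t_\ell}(Z_{t_\ell})-s_{t_\ell}(Z_{t_\ell+t})$;
\item the temporal increment $s_{t_\ell}-s_{t_\ell+t}$ evaluated at $Z_{t_\ell+t}$.
\end{itemize}
The second and third pieces are exactly the terms handled in Lemma~\ref{lem:disc-only}. The spatial term uses $L_{\nu,\ell}$, a bound on $\mathbb{E}\|Z_{t_\ell+t}-Z_{t_\ell}\|^2$, the uniform entropy bound $\bar H^{(k)}$ and Talagrand/LSI for $\nu$. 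The temporal term uses $L_{\ell,t}^2t^2$. Together they reproduce $\Lambda_1^{(k)}$. The first piece has its expectation taken under $\mu_{t_\ell}$, because $Z_{t_\ell}\sim\mu_{t_\ell}$. By the definition of $\Lambda_2^{(k)}$ it is therefore bounded by $\Lambda_2^{(k)}/2$ uniformly in $t$ and $\ell$, and the factor $2$ built into $\Lambda_2^{(k)}$ absorbs the constant from the splitting.

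Next I would use the LSI along the reference path. By Lemma~\ref{lem:forward-backward} (its proof via heat-flow interpolation), $\nu_{t}$ satisfies an LSI with a constant interpolating between $C_{LSI}+h$ at $t=0$ and $C_{LSI}$ at $t=h$. This gives a differential inequality $\frac{\mathrm{d}}{\mathrm{d}t}H\le -c(t)H+E$, where $E$ collects both error contributions. The parameter $u\ge1$ enters where Lemma~\ref{lem:disc-only} gives up a fraction $1/(2u^2)$ of the contraction rate so that the cross terms can be absorbed. This is where the exponent $1-1/(2u^2)$ comes from. Integrating with Grönwall over $[0,h]$, the accumulated error is $\int_0^h E\,e^{-\int_s^h c}\,\mathrm{d}s$, which evaluates to $\alpha_u$ times the constant error level. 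Since the new error is additive and constant in time, it gives exactly $\alpha_u\Lambda_2^{(k)}$ on top of $\alpha_u\Lambda_1^{(k)}$. At $t=h$ we have $\nu_h=\pi^X$ and $\mu_h=\rho^X_{k+1}$, and at $t=0$ we have $\mu_0=\rho^Y_{k+1/2}$ and $\nu_0=\pi^Y$, which yields the claim.

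The main obstacle is the bookkeeping of constants. The three-way split has to yield coefficient $1$ on both $\Lambda_1^{(k)}$ and $\Lambda_2^{(k)}$ after multiplying by $\alpha_u$. The natural route is to split only into (estimation error) plus (exact-score discretization error) with a factor $2$, and to reuse the Lemma~\ref{lem:disc-only} bound for the second part. That would put $2$ in front of $\Lambda_1^{(k)}$, so the constants need checking. If Lemma~\ref{lem:disc-only}'s proof already allots the factor $2$ to its own split, the $2$ in the definition of $\Lambda_2^{(k)}$ is exactly what matches it.

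A second subtlety is that with finite $N$ the particles are correlated and $\hat s$ depends on $Y_{k+1/2}$. I would treat each particle's marginal law and condition on the particle system, taking expectations at the end, so that $\Lambda_2^{(k)}$ is interpreted under the joint randomness.
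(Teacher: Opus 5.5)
Your route is the paper's. You interpolate the frozen-drift Euler step, apply Young's inequality with parameter $u$, separate the estimation error at the substep start, use $C_{LSI}(\pi^X P_s)\le C_{LSI}+s$ along the reference path, and integrate with the factor $(C_{LSI}+h-t)^{-1+1/(2u^2)}$ over the $T$ substeps. Your Gr\"onwall evaluation of $\alpha_u$ is correct.

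\textbf{Constants.} The problem you flag is real, and the paper does not resolve it. The paper takes exactly your ``natural route'' in \eqref{eq:error-decomp}, namely $\mathbb{E}\|\hat s_{N,M}(z_0)-\tilde s_t(z_t)\|^2\le 2\,\mathbb{E}\|\tilde s_0(z_0)-\tilde s_t(z_t)\|^2+2\,\mathbb{E}\|\hat s_{N,M}(z_0)-\tilde s_0(z_0)\|^2$. The $2$ inside $\Lambda_2^{(k)}$ absorbs the second factor, but the first leaves $2\Lambda_1^{(k)}$. Lemma~\ref{lem:disc-only} has no spare factor to cancel it. On the contrary, Corollary~\ref{one-step-bound} splits $\tilde s_0(z_0)-\tilde s_t(z_t)$ into spatial and temporal parts using $\|a+b\|^2\le\|a\|^2+\|b\|^2$, which is itself missing a factor $2$.

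Your three-way split does not rescue this. It costs a factor $3$ on every piece, so the $2$ built into $\Lambda_2^{(k)}$ does not absorb it: you get $\tfrac32\Lambda_2^{(k)}$. Taken at face value, the paper's argument yields $\alpha_u(2\Lambda_1^{(k)}+\Lambda_2^{(k)})$. Done carefully, it yields $\alpha_u(c_1\Lambda_1^{(k)}+c_2\Lambda_2^{(k)})$ for absolute constants, up to the correction below. State the result in that form rather than forcing coefficient $1$ on $\Lambda_1^{(k)}$.

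\textbf{Two further points, both also glossed over in the paper.}

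First, the spatial piece is not literally the one in Lemma~\ref{lem:disc-only}, because the interpolated process is now driven by $\hat s_{N,M}$. Hence $\mathbb{E}\|Z_{t_\ell+t}-Z_{t_\ell}\|^2=t^2\,\mathbb{E}\|\hat s_{N,M}(Z_{t_\ell},t_\ell)\|^2+td\le 2t^2\,\mathbb{E}\|s_{t_\ell}(Z_{t_\ell})\|^2+t^2\Lambda_2^{(k)}+td$. This doubles the $\mathbb{E}\|s_{t_\ell}\|^2$-contribution to $\Lambda_1^{(k)}$ and adds a term of order $\eta^2L_{\nu,*}^2\Lambda_2^{(k)}$.

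Second, do not condition the KL on the particle system. Given $U=(Y_{k+1/2},X_k)$, the starting point $z_T^{(i)}$ has law $\mathcal{N}(x_k^{(i)},hI_d)$. The average of $H_{\pi^Y}(\mathcal{N}(x_k^{(i)},hI_d))$ equals $H_{\pi^Y}(\rho^Y_{k+1/2})$ plus the mutual information between $x_k^{(i)}$ and $z_T^{(i)}$. That extra term is positive even when $\rho^X_k=\pi^X$, so the conditional route cannot give the stated right-hand side.

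Work with the marginal law of a single particle instead. The Brownian increment on $[t_\ell,t_\ell+t]$ is independent of $Z_{t_\ell}$, $U$ and the Monte Carlo draws. So the marginal Fokker--Planck equation has drift $\mathbb{E}[\hat s_{N,M}(Z_{t_\ell},t_\ell)\mid Z_{t_\ell+t}]$, just as the paper handles $\mathbb{E}_{\mu_{0|t}}[\tilde s_0(z_0)\mid z_t]$. Jensen then places the joint expectation defining $\Lambda_2^{(k)}$ in the error term.
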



\autoref{prop:main-split}, combined with the KL contraction in the forward step (\autoref{lem:forward-backward}, \eqref{eq:forward-contraction}), shows that $H_{\pi^X}(\rho_k^X)$ decreases geometrically in $k$, up to the accumulated approximation errors. 
Indeed, combining the forward contraction with the approximate backward bound yields
\begin{align*}
H_{\pi^X}(\rho^X_{k+1})
\;\le\;
\frac{1}{r}\, H_{\pi^X}(\rho^X_{k})
\;+\;
\alpha_u\bigl(\Lambda_1^{(k)}+\Lambda_2^{(k)}\bigr),
\end{align*}
where $r \coloneqq (1+h/C_{LSI})^{\,2-1/(2u^2)} > 1$. 
Unrolling this inequality gives, for any $k\ge 1$,
\begin{align*}
H_{\pi^X}(\rho^X_{k})
\;\le\;
\frac{1}{r^{k}}\,H_{\pi^X}(\rho^X_{0})
\;+\;
\alpha_u \sum_{l=0}^{k-1} \frac{\Lambda_1^{(l)}+\Lambda_2^{(l)}}{r^{\,k-1-l}}.
\end{align*}
In particular, if the per-step error satisfies the uniform bound 
$\Lambda_1^{(j)}+\Lambda_2^{(j)} \le \Lambda$ for all $j\ge 0$, 
then using $\sum_{l=0}^{k-1} r^{-l} \le \frac{r}{r-1}$ yields
\begin{align*}
H_{\pi^X}(\rho^X_{k})
\;\le\;
\frac{1}{r^{k}}\,H_{\pi^X}(\rho^X_{0})
\;+\;
\frac{\alpha_u r}{r-1}\,\Lambda.
\end{align*}
Consequently, to ensure $H_{\pi^X}(\rho^X_{k}) \le \varepsilon$, 
it suffices to choose
\begin{align*}
k
\;\ge\;
\frac{\log\bigl(2H_{\pi^X}(\rho^X_{0})/\varepsilon\bigr)}{\log r}
= 
\frac{\log\bigl(2H_{\pi^X}(\rho^X_{0})/\varepsilon\bigr)}{2(1 - 1/(2u^2))\log (1 + h / C_{LSI})}
\quad\text{and}\quad
\Lambda
\;\le\;
\frac{\varepsilon (r-1)}{2\alpha_u r}.
\end{align*}
Equivalently, this yields the iteration complexity $k = O\bigl(\log(H_{\pi^X}(\rho^X_{0})/\varepsilon)/\log r\bigr)$ under the admissible error level $\Lambda = O\bigl(\varepsilon (r-1)/(\alpha_u r)\bigr)$. 

As for the score estimation error $\Lambda_2^{(k)}$, it reflects the Monte Carlo error arising from the finite sample sizes $N$ and $M$.
As shown in \autoref{sec:appendix-score-error}, for instance, under bounded conditional variances of $\pi^X$ we obtain $O(1/N)$ error, and under bounded expectations of $e^{4f}$ and finite fourth moments under the Gaussian mixture proposals we obtain $O(1/M)$ error, yielding $\Lambda_2^{(k)}=O(1/N+1/M)$.

From Theorem~\ref{thm:proximal-contraction}, the ideal proximal sampler requires about $(2\log(1+h/C_{LSI}))^{-1}$ outer iterations times a logarithmic factor in the initial divergence.
When $C_{LSI}\gg h$, this factor is approximated by $\tilde O(C_{LSI}/h)$, suggesting that larger $h$ is favorable.
While rejection-sampling implementations of RGO suffer from an upper bound on $h$, our method can take $h$ large as long as $T=O(h)$ to control the discretization error.
This does not change the overall computational cost (outer iterations $\times$ $T$ steps), but allows $h$ to reflect the global structure of~$\pi^X$ such as inter-mode distances rather than only the local smoothness of $f$, which we confirm to be practically advantageous in experiments.

\section{Experiments}
\label{sec:experiments}

We design two experiments to showcase the strengths of our algorithm as an approximation to proximal sampling.  
First, we revisit the Gaussian Lasso mixture~\citep{liang2023a} to test whether our method accelerates convergence beyond RGO-based proximal sampling, especially via parallel particle updates.  
Second, we study uniform distributions over bounded, nonconvex, and disjoint domains, comparing our method with In-and-Out~\citep{NEURIPS2024_c4006ff5}, the proximal sampler for uniform distributions originally analyzed for convex bodies.
These experiments respectively demonstrate the benefits of larger step sizes and the applicability of our approach when gradients are unavailable.

\subsection{Gaussian Lasso Mixture}
\label{sec:gaussian-lasso-exp}

\paragraph{Setup.}
Following the experimental setting in \cite{liang2023a}, we set the target distribution as a Gaussian Lasso mixture:
\begin{align*}
\pi^X(x) = \frac{\sqrt{\det Q}}{2 \sqrt{(2\pi)^d}} \exp\left(-\frac{1}{2}(x-1)^\top Q (x-1)\right) +  2^{d-1} \exp \left(-\|4x\|_{1}\right)
\end{align*}
where $Q = USU^\top$, $d = 5$, $S = \mathrm{diag}(14, 15, 16, 17, 18)$, and $U$ is an arbitrary orthogonal matrix.

We compare against the RGO baseline with $100$ chains and step size $h{=}1/135$, 
exactly matching the experimental setting of \citet{liang2023a}, where it outperformed ULA and MALA.  
Our method uses step size $h{=}1/10$ under two settings: $N{=}100$ interacting particles, or $N{=}1$ run with $100$ chains.  
For comparability, we thin the RGO baseline by grouping every 10 single-step updates into one iteration, so that the parallel $f$-evaluation cost is comparable to ours.

Convergence is measured by KL divergence to the target distribution, estimated using \citet{Buth2025b} with a fixed reference of $1000$ particles from a long RGO run.  
At each evaluation, $1000$ particles are pooled across $10$ successive iterations.  
The histogram in the right panel of \autoref{fig:lasso-marginal} shows this reference distribution.  
All experiments are repeated with $10$ random seeds, reporting the mean and variance of the KL estimate.  
Detailed settings are given in \autoref{tab:exp-settings-1} and \autoref{sec:experiment-details}.

\paragraph{Results.}
\autoref{fig:lasso-kl} shows the convergence curves.  
Our method converges substantially faster than the RGO baseline.  
With a step size $13.5$ times larger, our method reaches comparable KL divergence in $\sim$100 iterations, while the proximal sampler needs $\sim$950 iterations ($\approx9500$ RGO updates).  
Our method also surpasses the ablated variant with $N{=}1$ independent parallel chains, indicating that particle interactions are essential to accelerate mixing.  
The marginals in \autoref{fig:lasso-marginal} indicate that our method is already approaching the target by $\sim$100 iterations (vs.~$\sim$950 for RGO), and it closely matches the long-run reference by $\sim$250 iterations.  
A more detailed observation is deferred to \autoref{sec:experiment-details} (\autoref{fig:exp-step-size}), which shows that increasing the step size $h$ leads to faster convergence.

\begin{figure}[t]
    \centering
    \includegraphics[width=0.7\linewidth]{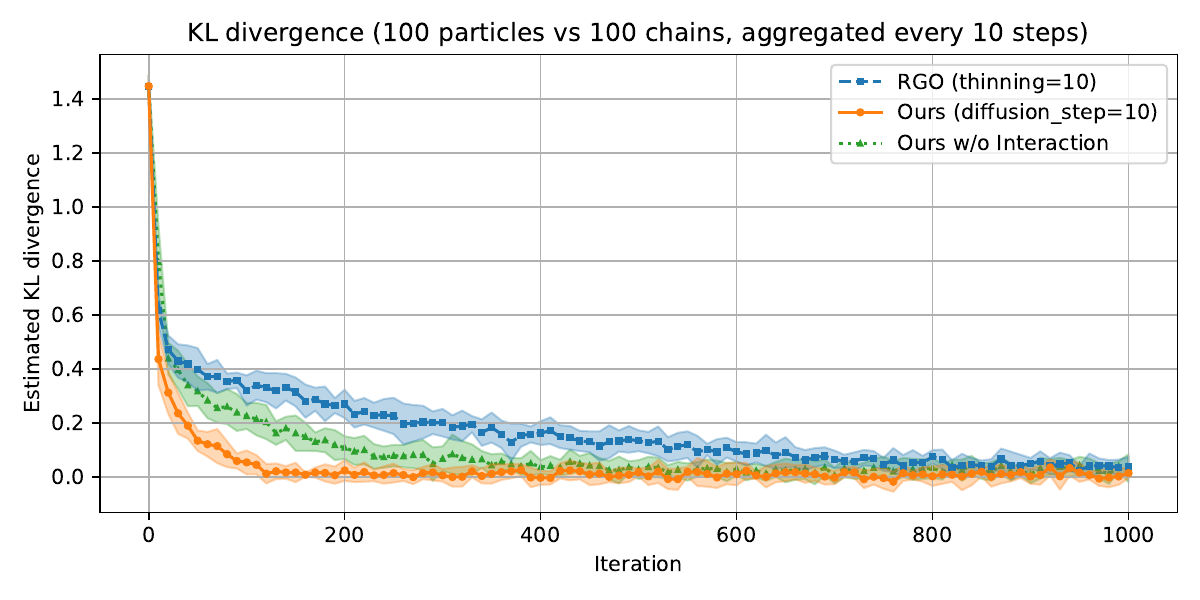}
    \caption{Convergence of estimated KL divergence, averaged over 10 random seeds with shaded areas indicating variances.
    Our method (orange) outperforms both the proximal sampler with RGO (blue) and an ablated variant of our algorithm without particle interactions (green).
    It achieves the same accuracy as RGO in about $10\times$ fewer iterations ($100\times$ faster when accounting for thinning).}
\label{fig:lasso-kl}
\end{figure}

\begin{figure}[t]
    \centering
    \begin{minipage}{0.4\linewidth}
        \centering
        \includegraphics[width=\linewidth]{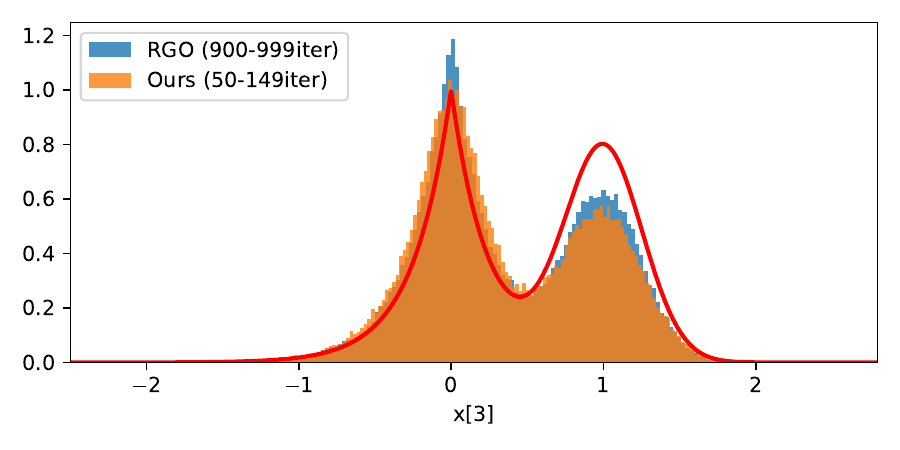}
    \end{minipage}
    \begin{minipage}{0.4\linewidth}
        \centering
        \includegraphics[width=\linewidth]{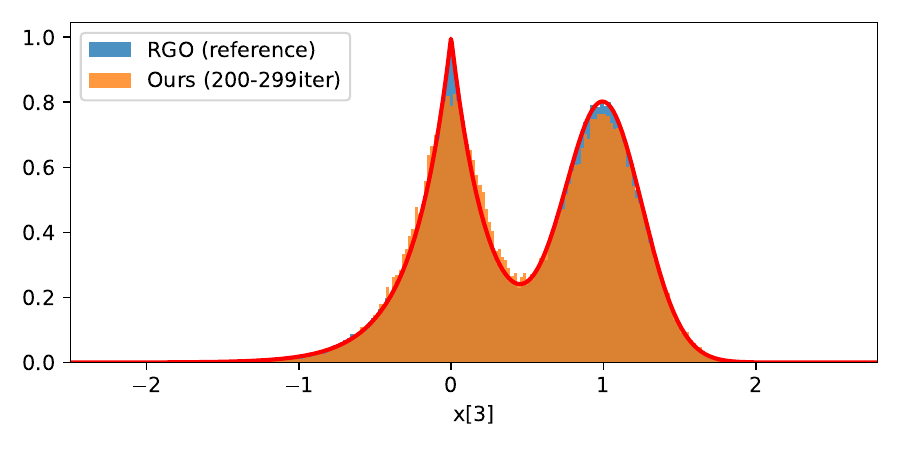}
    \end{minipage}
    \caption{One-dimensional marginals of $\pi^X$ along the third coordinate.  
    The red curve is the ground-truth.  
    Our method around 100 iterations  (left, orange) already matches the RGO-based sampler at $\sim$1000 iterations (left, blue), 
    and with only 200--300 iterations (right, orange) it closely aligns with the reference obtained from a sufficiently long run following \citet{liang2023a} (right, blue).}
    \label{fig:lasso-marginal}
\end{figure}

\paragraph{Discussion.}
These results demonstrate that relaxing the stringent step size restriction of proximal sampling yields a practical gain of nearly an order of magnitude in convergence speed.  
The benefit is amplified when leveraging multiple interacting particles, which facilitate more efficient exploration of the mixture components.  
This effect can be explained by the surrogate target distribution in~\eqref{eq:qhat-def}, which re-weights via $(\hat{q}_{k+1/2}(y \mid X_k))^{-1}$ to down-weight overpopulated regions and encourage exploration of sparser ones, thereby accelerating convergence beyond the step-size effect alone.

\begin{figure}[t]
    \centering
    \begin{minipage}{0.3\linewidth}
        \centering
        \includegraphics[width=\linewidth]{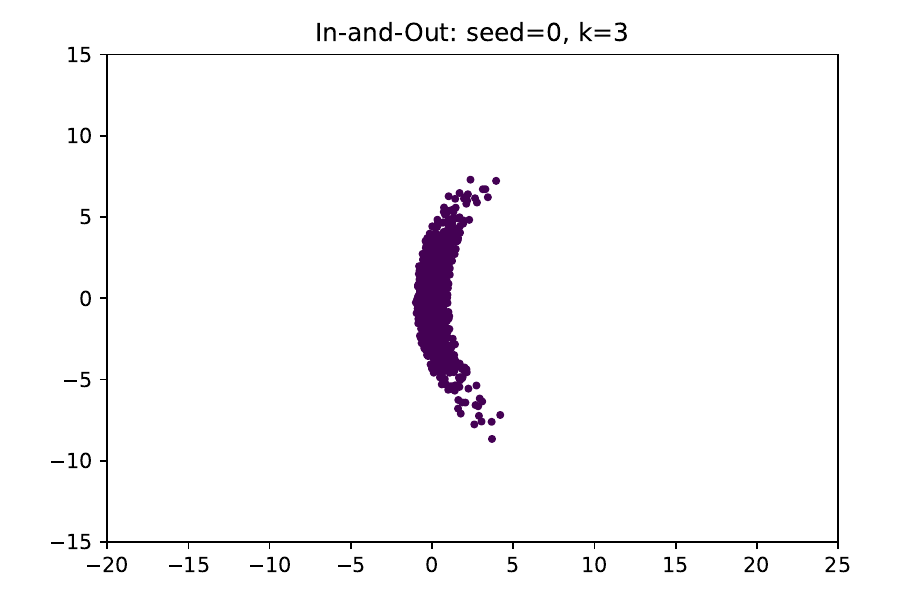}
    \end{minipage}
    \begin{minipage}{0.3\linewidth}
        \centering
        \includegraphics[width=\linewidth]{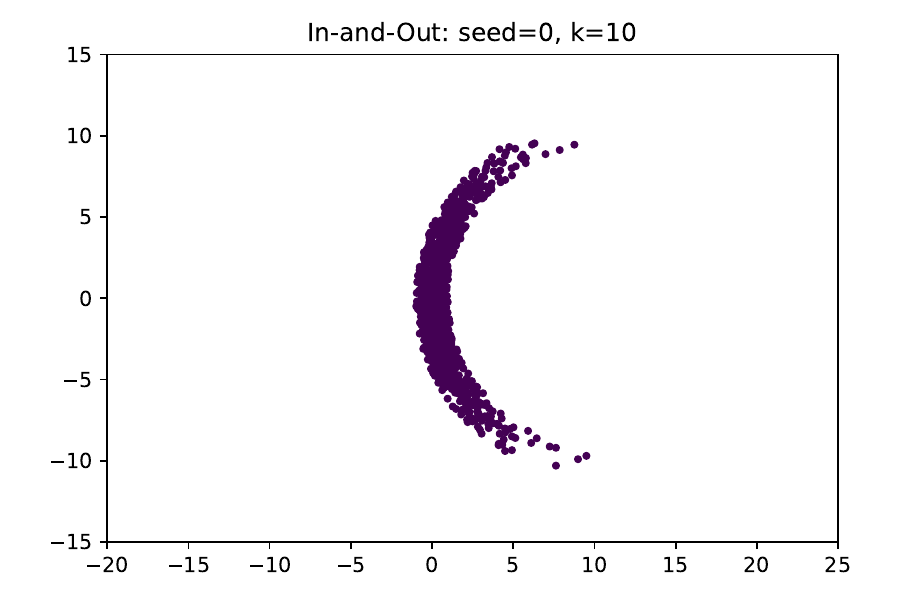}
    \end{minipage}
    \begin{minipage}{0.3\linewidth}
        \centering
        \includegraphics[width=\linewidth]{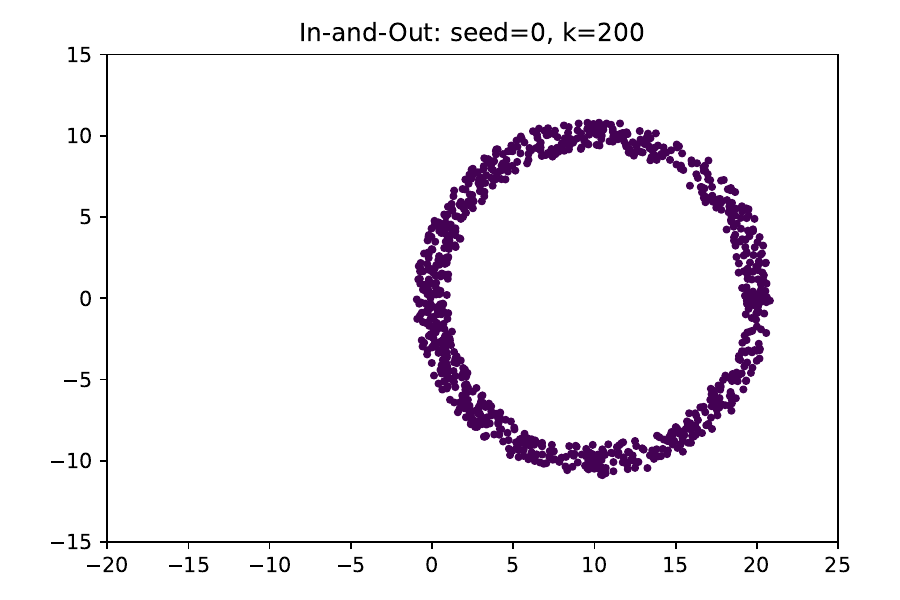}
    \end{minipage}
    \begin{minipage}{0.3\linewidth}
        \centering
        \includegraphics[width=\linewidth]{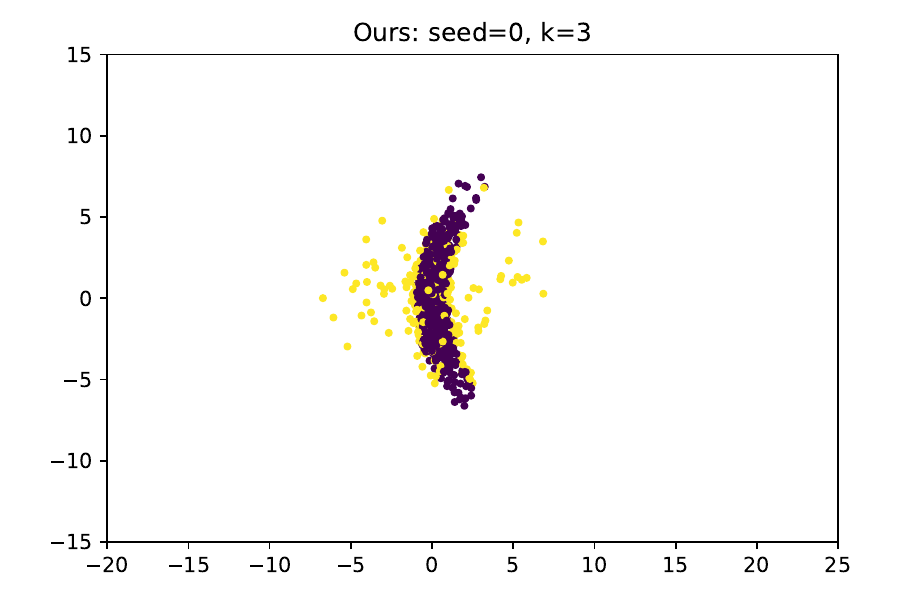}
    \end{minipage}
    \begin{minipage}{0.3\linewidth}
        \centering
        \includegraphics[width=\linewidth]{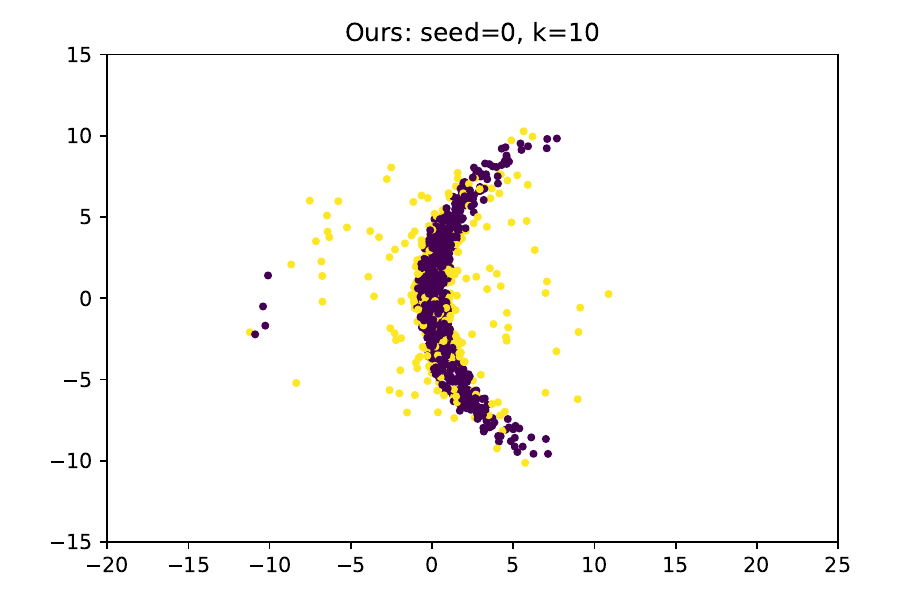}
    \end{minipage}
    \begin{minipage}{0.3\linewidth}
        \centering
        \includegraphics[width=\linewidth]{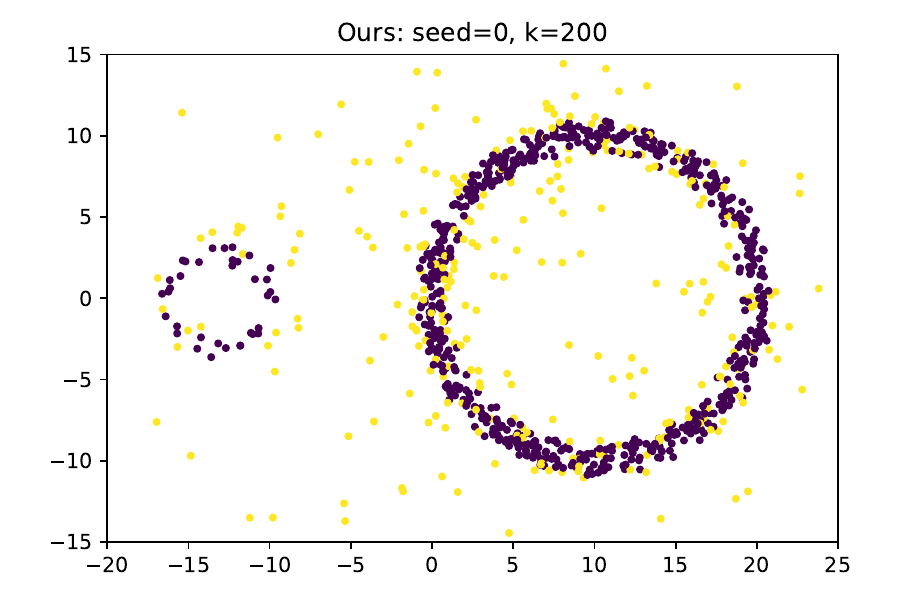}
    \end{minipage}
    \caption{Empirical distributions at $k=3,10,200$ iterations with seed $=0$ on the two-tori domain.  
    In-and-Out (top row) finds $T_1$, which overlaps with the initial standard Gaussian distribution, but fails to reach $T_2$.  
    Our method (bottom row) generates some particles outside the domain but gradually drives particles toward $T_2$, demonstrating its ability to explore both components.}
    \label{fig:tori-comparison}
\end{figure}

\subsection{Uniform Distributions on Bounded Domains}
\label{sec:uniform-dist-exp}

\paragraph{Setup.}
We evaluate our method on uniform sampling over a bounded, nonconvex, and disjoint domain $K \subset \mathbb{R}^3$, and compare it with the In-and-Out~\citep{NEURIPS2024_c4006ff5}.  
In-and-Out proposes $y_{i+1} \sim \mathcal{N}(x_i,hI_d)$ and repeatedly resamples $x_{i+1} \sim \mathcal{N}(y_{i+1},hI_d)$ until $x_{i+1}\in K$ or a retry threshold $R$ is reached.  
It converges under warm starts, but practical efficiency requires a convex $K$.  

We take $K$ as the union of two disjoint solid tori in $\mathbb{R}^3$.  
$T_1$ is a torus centered at $(10,0,0)$ with major radius $10$ and minor radius $1$,  
and $T_2$ is a torus centered at $(-13,0,0)$ with major radius $3$ and minor radius $1$.  
We denote $K = T_1 \cup T_2$.
Particles start from $\mathcal{N}(0,I_3)$.  
Our sampler uses the potential $f(x)=0$ if $x\in K$ and $100$ otherwise, inducing an approximate uniform law.  
We run $1{,}000$ particles and monitor their occupancies, visualized from the third coordinate direction.
Details of the experimental setting are given in \autoref{tab:exp-settings-2}.  

\paragraph{Results.}
\autoref{fig:tori-comparison} shows that In-and-Out converges to uniformity inside the near torus $T_1$, consistent with its guarantee under warm starts, but fails to reach the distant torus $T_2$.  
Our method first fills the near torus, then transitions to the remote one, where many particles eventually accumulate.  

\paragraph{Discussion.}
This experiment confirms that our algorithm works without gradients and can explore disconnected modes where exact proximal steps stagnate.  
The noisy score approximation facilitates such transitions, akin to effects observed in diffusion-based black-box optimization~\citep{lyu2025nonparametric}.  
Other constrained-domain samplers include Projected Langevin Monte Carlo~\citep{Bubeck2018}, MYULA~\citep{pmlr-v65-brosse17a}, and penalized Langevin dynamics based on distance to the constraint set~\citep{v25:22-1443}, as well as proximal samplers for convex bodies using projection or separation oracles~\citep{dang2025oracle}.
All these methods critically rely on projection- or separation-type operations and are therefore limited to simple convex bodies where such mappings are computable.  
By contrast, our sampler only requires a membership oracle or a simple outside penalty, making the zeroth-order oracle framework applicable to a broader range of domains.

\section{Connections to Other Sampling Methods}

Diffusion-based Monte Carlo methods transport samples from a Gaussian initialization to the target distribution.  
Some rely on training models for score estimation~\citep{vargas2023denoising,richter2024improved}, while others drive auxiliary samplers~\citep{huang2024reverse,NEURIPS2024_8337b176}.  
Several works also develop acceleration and correction techniques within this paradigm~\citep{lu2022dpmsolver,kim2023denoisingmcmc,li2024accelerating}.  
Beyond diffusion-based pushforwards, related approaches construct explicit density paths~\citep{10.5555/3692070.3692586,guo2025provable}.  

Our approach also specifies and employs SDE dynamics between two fixed distributions, but unlike the one-way pushforward paradigm, it repeatedly applies these dynamics.  
This removes the restriction of restarting from a Gaussian base by using a variance-expanding (VE) diffusion.  
In the context of variance-preserving (VP) diffusions, sampling error partly arises from the discrepancy between the Gaussian equilibrium and the distribution obtained after finite-time mixing~\citep{NEURIPS2023_06abed94,pierret2025diffusion}.  
Our analysis suggests that, even though the VE process does not reach the Gaussian equilibrium in any bounded horizon, convergence can still be ensured by alternating finite-time noise addition and denoising dynamics through repeated cycles.   

A particularly relevant comparison is with ZOD-MC~\citep{NEURIPS2024_8337b176}, a diffusion-based method that performs denoising from a Gaussian initialization.  
In ZOD-MC, the outer loop transports samples while an inner proximal sampler is used for score estimation.  
Our method inverts this structure: instead of nesting a sampler inside a pushforward loop, we simulate proximal-style SDE dynamics directly.  
Moreover, unlike ZOD-MC, which requires access to the minimizer of the potential—or, in practice, gradient information to locate local solutions for proximal updates—our algorithm operates solely with zeroth-order oracle information.  
Another related line is SLIPS~\citep{grenioux2024stochastic}, which alternates between denoising noisy observations and updating auxiliary variables.  
This alternating scheme resembles ours, but the overall iteration follows one-way dynamics that converge to the target distribution as the time horizon grows.

Finally, our method is also close in spirit to Diffusive Gibbs Sampling (DiGS)~\citep{pmlr-v235-chen24be}, which performs Gibbs updates by alternating perturbation and denoising, updating both the state and an auxiliary variable at each step.  
However, the designs differ: DiGS employs VP diffusion with auxiliary samplers such as MALA and provides no convergence guarantees, whereas our method uses VE diffusion consistent with proximal sampling, yields provable contraction guarantees, and exploits parallel multi-particle computation for lightweight score estimation.  

\section{Conclusion}

We have introduced a diffusion-based approximation of proximal sampling, 
which simulates the backward SDE using only zeroth-order information of $f$.  
The key idea is to approximate the intermediate particle distribution by a Gaussian mixture, 
enabling score estimation without auxiliary samplers or additional model training.  

Our method remains within the proximal sampling framework, 
while also being closely connected to diffusion-based Monte Carlo methods.  
Our theoretical analysis shows that it achieves comparable convergence rates 
when discretization and score estimation errors are properly controlled.  
Unlike rejection-sampling implementations of RGO, 
which are constrained to small step sizes by local properties of $f$, 
our algorithm may accommodate larger step sizes that reflect global features such as inter-mode distances.  

Finally, we incorporate interaction terms among multiple particles, 
which empirically accelerate convergence.  
A complete theoretical characterization of these interactions, 
together with efficient parameter tuning across iterations, 
remains an important direction for future work.  
More generally, understanding how such empirical gains scale to higher-dimensional settings remains an open challenge, as the curse of dimensionality is a well-known issue in the broader context of zeroth-order methods and rejection sampling.

\subsubsection*{Acknowledgments}
This research is supported by the National Research Foundation, Singapore and the Ministry of Digital Development and Information under the AI Visiting Professorship Programme (award number AIVP-2024-004). Any opinions, findings and conclusions or recommendations expressed in this material are those of the author(s) and do not reflect the views of National Research Foundation, Singapore and the Ministry of Digital Development and Information.

\clearpage
\section*{Reproducibility Statement}
We provide complete assumptions and full proofs of all theoretical results in \autoref{sec:appendix-a}.
The detailed settings of our experimental evaluations are described in \autoref{sec:experiments} and \autoref{sec:experiment-details}.
In addition, the source code for reproducing the empirical results is available at \url{https://github.com/htkg/zod-ps}.

\bibliography{iclr2026_conference}
\bibliographystyle{iclr2026_conference}

\clearpage
\appendix
\section{Proofs and Technical Details}
\label{sec:appendix-a}

\subsection{Derivation of the Surrogate Score Estimator}
\label{appendix:score-derivation}

We begin by substituting $\pi^X$ with
\begin{align*}
\hat{q}_{k+1}(x \mid Y_{k+\frac{1}{2}}, X_k)
\propto
\frac{1}{N}\sum_{j=1}^N \pi^{X|Y=y_j}(x)\,
\frac{\pi^Y(y_j)}{\hat q_{k+\frac{1}{2}}(y_j \mid X_k)},
\end{align*}
where $\hat q_{k+1/2}(y \mid X_k) = \tfrac{1}{N}\sum_i \mathcal{N}(y; x_i, h I_d)$.  
This follows from the fact that the forward heat flow~\eqref{eq:heat-forward} transports the empirical Dirac mixture $\tfrac{1}{N}\sum_i \delta_{x_i}$ to the Gaussian mixture $\hat q_{k+1/2}$.  

Using
\begin{align*}
\pi^{X|Y=y_j}(x) &=
\frac{\exp\!\left(-f(x) - \tfrac{1}{2h}\|x-y_j\|^2\right)}
     {\int \exp\!\left(-f(u) - \tfrac{1}{2h}\|u-y_j\|^2\right) \mathrm{d} u}, \\
\pi^Y(y_j) &\propto
\int \exp\!\left(-f(u) - \tfrac{1}{2h}\|u-y_j\|^2\right) \mathrm{d} u,
\end{align*}
we obtain
\begin{align*}
\hat{q}_{k+1}(x \mid Y_{k+\frac{1}{2}}, X_k)
&\propto
\frac{1}{N}\sum_{j=1}^N
\frac{\exp\!\left(-f(x) - \tfrac{1}{2h}\|x-y_j\|^2\right)}
     {\hat q_{k+\frac{1}{2}}(y_j \mid X_k)} \\
&\propto
\left(\frac{1}{N}\sum_{j=1}^N
\frac{\mathcal{N}(x; y_j, h I_d)}{\hat q_{k+1/2}(y_j \mid X_k)}\right)
\exp(-f(x))
\eqqcolon g^{k+1/2}_N(x)\exp(-f(x)),
\end{align*}
where $g^{k+1/2}_N$ is the unnormalized density of an $N$-component weighted Gaussian mixture.  
As $N \to \infty$, 
\begin{align*}
\lim_{N \to \infty} g^{k+1/2}_N(x)
= \int \frac{\mathcal{N}(x; y, h I_d)}{\hat q_{k+1/2}(y \mid X_k)}
     \hat q_{k+1/2}(y \mid X_k) \mathrm{d} y
= \mathrm{const},
\end{align*}
and therefore $\hat q_{k+1}(\cdot \mid Y_{k+1/2}, X_k) \to \pi^X(\cdot)$.  

Since $\hat q_{k+1}$ can be regarded as the product of  
(i) a weighted Gaussian mixture term and (ii) the exponential factor $\exp(-f)$,  
the surrogate score function
$\hat s_t(Z_t^\leftarrow) \coloneqq \nabla \log (\hat q_{k+1} P_{h-t})(Z_t^\leftarrow)$
can be computed using only the particles and evaluations of $f$.  
Following the derivation in~\cite{lyu2025nonparametric}, we obtain
\begin{align*}
\hat{s}_t(z)
&= \nabla \log \Biggl( \int g^{k+1/2}_N(x_0)\, p(z\mid x_0)\, e^{-f(x_0)} \mathrm{d} x_0 \Biggr) \\
&= \frac{\int g^{k+1/2}_N(x_0)\, \nabla p(z\mid x_0)\, e^{-f(x_0)} \mathrm{d} x_0}
        {\int g^{k+1/2}_N(x_0)\, p(z\mid x_0)\, e^{-f(x_0)} \mathrm{d} x_0}.
\end{align*}

Here
\begin{align*}
p(z\mid x_0) &= \frac{1}{(2\pi\sigma_t^2)^{d/2}}
\exp\!\left(-\frac{\|z - x_0\|^2}{2\sigma_t^2}\right),
\quad \sigma_t^2 = h - t,
\end{align*}
so that
\[
\nabla p(z\mid x_0) = p(z\mid x_0)\,\frac{x_0 - z}{\sigma_t^2}.
\]

Substituting into the above expression yields
\begin{align*}
\hat{s}_t(z)
&=
\frac{\int g^{k+1/2}_N(x_0)\, p(z\mid x_0)\,\tfrac{x_0 - z}{\sigma_t^2} e^{-f(x_0)}  \mathrm{d} x_0}
     {\int g^{k+1/2}_N(x_0)\, p(z\mid x_0)\, e^{-f(x_0)} \mathrm{d} x_0} \\
&=
\frac{\int g^{k+1/2}_N(x_0\mid z)\,\tfrac{x_0 - z}{\sigma_t^2} e^{-f(x_0)} \mathrm{d} x_0}
     {\int g^{k+1/2}_N(x_0\mid z)\, e^{-f(x_0)} \mathrm{d} x_0},
\end{align*}
which is exactly the score function of the surrogate dynamics~\eqref{eq:surrogate-drift}.

\begin{remark}
\cite{lyu2025nonparametric} consider distributional black--box optimization for maximizing $f$.
When $p^{k}(x_0)$ is a Gaussian mixture, they show that setting $p^{k+1}(x_0)\propto p^{k}(x_0)\exp(-f(x_0)/\lambda)$ as the denoised distribution at time $0$ in VP diffusion,
the score at each time $t$ can be computed directly from samples of the posterior Gaussian mixture, without any auxiliary sampler.

A structurally similar mechanism appears in our particle-based proximal sampling.
To sample from $\pi^X(x)\propto \exp(-f(x))$
(in practice, by moving samples from $\hat q_{k+1/2}( \, y \mid X_k \, )$ toward $\hat q_{k+1}( \, x \mid Y_{k+1/2}, X_k \, )$),
we place the denoised distribution $g_{N}^{k+1/2}(x)\exp(-f(x))$,
where $g_{N}^{k+1/2}$ is the Gaussian mixture obtained by smoothing the empirical particles $Y_{k+1/2}$.
The score identity yields a backward update that uses only zeroth-order evaluations of $f$.
\end{remark}

\subsection{Derivation of the (Unnormalized) Gaussian Mixture Posterior}
\label{posterior-derivation}

When sampling from an $N$-component Gaussian mixture, we first draw a component index according to the relative weights and then sample from the corresponding Gaussian component. 
For this reason, we often ignore constant multiplicative factors of the mixture distribution for simplicity.

We write $g^{k+1/2}_N(x_0) \propto \sum_j \alpha_j \, \mathcal{N}(x_0; y_j, h I_d)$ with $\alpha_j \coloneqq 1/\hat{q}_{k+1/2}(y_j\mid X_k)$.  
By Bayes' rule,
\begin{align}
g^{k+\frac{1}{2}}_N(x_0 \mid z)
&= \frac{g^{k+\frac{1}{2}}_N(x_0)\,p(z \mid x_0)}{\int g^{k+\frac{1}{2}}_N(u)\,p(z \mid u)\, \mathrm{d} u} \notag \\
&= \frac{\sum_{j=1}^N \alpha_j \,\mathcal{N}(x_0; y_j, h I_d)\,\mathcal{N}(z; x_0, \sigma_t^2 I_d)}{\sum_{j=1}^N \alpha_j \int \mathcal{N}(u; y_j, h I_d)\,\mathcal{N}(z; u, \sigma_t^2 I_d)\, \mathrm{d} u} \notag \\
&\propto \sum_{j=1}^N \alpha_j \,\mathcal{N}(x_0; y_j, h I_d)\,\mathcal{N}(z; x_0, \sigma_t^2 I_d).
\label{eq:gm-posterior}
\end{align}
We now complete the square in the exponent of the product $\mathcal{N}(x_0; y_j, h I_d)\,\mathcal{N}(z; x_0, \sigma_t^2 I_d)$ viewed as a function of $x_0$.  
Using the identity
\begin{align*}
\frac{1}{2h}\|x_0 - y_j\|^2 + \frac{1}{2\sigma_t^2}\|z - x_0\|^2
= \frac{1}{2\bar{\sigma}^2}\|x_0 - m_j(z)\|^2 + \frac{1}{2(h+\sigma_t^2)}\|z - y_j\|^2,
\end{align*}
with
\begin{align*}
\bar{\sigma}^2 \coloneqq \Big(h^{-1} + \sigma_t^{-2}\Big)^{-1} = \frac{h\,\sigma_t^2}{h+\sigma_t^2},
\quad
m_j(z) \coloneqq \bar{\sigma}^2\Big(h^{-1} y_j + \sigma_t^{-2} z\Big)
= \frac{\sigma_t^2}{h+\sigma_t^2}y_j + \frac{h}{h+\sigma_t^2}z,
\end{align*}
we obtain the product-of-Gaussians factorization
\begin{align*}
\mathcal{N}(x_0; y_j, h I_d)\,\mathcal{N}(z; x_0, \sigma_t^2 I_d)
= \mathcal{N}(z; y_j, (h+\sigma_t^2) I_d)\,\mathcal{N}(x_0; m_j(z), \bar{\sigma}^2 I_d).
\end{align*}
Substituting this into \eqref{eq:gm-posterior} yields
\begin{align*}
g^{k+\frac{1}{2}}_N(x_0 \mid z)
\propto \sum_{j=1}^N \alpha_j \,\mathcal{N}(z; y_j, (h+\sigma_t^2) I_d)\,\mathcal{N}(x_0; m_j(z), \bar{\sigma}^2 I_d).
\end{align*}
Therefore $g^{k+1/2}_N(x_0 \mid z)$ is again a Gaussian mixture with a common covariance $\bar{\sigma}^2 I_d$ and updated means $m_j(z)$.  
Writing the relative weights as
\begin{align*}
w_j(z)
\coloneqq
\alpha_j \,\mathcal{N}(z; y_j, (h+\sigma_t^2) I_d)
=
\frac{1}{\hat{q}_{k+1/2}(y_j \mid X_k)} \,\mathcal{N}(z; y_j, (h+\sigma_t^2) I_d),
\end{align*}
we obtain the desired posterior decomposition
\begin{align*}
g^{k+\frac{1}{2}}_N(x_0 \mid z)
\propto \sum_{j=1}^N  w_j(z)\,\mathcal{N}(x_0; m_j(z), \bar{\sigma}^2 I_d).
\end{align*}
Finally, note that using $W(z) = \sum_j w_j (z)$, the normalized posterior distribution can be written as
\begin{align*}
g^{k+\frac{1}{2}}_N(x_0 \mid z)
= \frac{1}{W(z)} \sum_{j=1}^N w_j(z)\,\mathcal{N}(x_0; m_j(z), \bar{\sigma}^2 I_d).
\end{align*}

\subsection{Analysis for Time-discretization Error}
\label{sec:time-disc-err}

Our algorithm simulates the surrogate version of diffusion process \eqref{eq:heat-backward} by dividing the dynamics over horizon $h$ into $T$ steps.  
We first analyze the discretization error under the assumption that the score function is perfectly estimated (i.e., $N, M \to \infty$).  
Each step simulates the time evolution over an interval of length $\eta \coloneqq h/T$, corresponding to the segment $(\ell-1)\eta+t \in [(\ell-1)\eta, \ell\eta]$ with $t \in [0,\eta]$ for $\ell=1,\ldots,T$.  
In the ideal dynamics \eqref{eq:heat-backward}, the drift term is time-dependent, whereas in the discretized scheme we approximate it by fixing the drift at the beginning of each step.  
Following an argument similar to that in~\cite{NEURIPS2019_65a99bb7}, we obtain the following result.

\begin{lemma}
\label{lem:discretization}

Fix a step index $\ell \in \{1,\ldots,T\}$ and let $\eta \coloneqq h/T$.
Let $(\nu_t)_{t \in [0,\eta]}$ denote the ideal backward marginals in~\eqref{eq:heat-backward} at elapsed time $(\ell-1)\eta + t$ for $t \in [0,\eta]$, and $(\mu_t)_{t \in [0,\eta]}$ denote the frozen-drift approximation within this step.
Then for $u\ge1$ and all $t \in [0,\eta]$,
\begin{align}
\frac{\mathrm{d}}{\mathrm{d}t} H_{\nu_t}(\mu_t)
\;\le\;
-\Bigl(\frac{1}{2}-\frac{1}{4u^2}\Bigr)\, J_{\nu_t}(\mu_t)
\;+\;
u^2 \, \mathbb{E}_{\mu_{0,t}}\!\bigl[\|\tilde{s}_0(z_0)-\tilde{s}_t(z_t)\|^2\bigr].
\label{eq:only-time-disc-err}
\end{align}
Equivalently, integrating over $t \in [0,\eta]$ yields
\begin{align*}
H_{\nu_{\eta}}(\mu_{\eta}) - H_{\nu_{0}}(\mu_{0})
\;\le\;
-\Bigl(\frac{1}{2}-\frac{1}{4u^2}\Bigr)\, \int_{0}^{\eta} J_{\nu_t}(\mu_t)\, \mathrm{d} t
\;+\;
u^2 \int_{0}^{\eta} \mathbb{E}_{\mu_{0,t}}\!\bigl[\|\tilde{s}_0(z_0)-\tilde{s}_t(z_t)\|^2\bigr] \, \mathrm{d} t .
\end{align*}

\end{lemma}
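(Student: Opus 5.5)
This follows the interpolation argument of \cite{NEURIPS2019_65a99bb7} (their Lemma~3), adapted to a time-inhomogeneous reference flow. Within step $\ell$, write $\tilde s_t \coloneqq \nabla\log(\pi^X P_{h-(\ell-1)\eta-t})$ for the true time-dependent score. The ideal marginals then solve
\[
\partial_t\nu_t=-\mathrm{div}(\nu_t\tilde s_t)+\tfrac12\Delta\nu_t .
\]
The frozen-drift process $\mathrm{d}z_t=\tilde s_0(z_0)\,\mathrm{d}t+\mathrm{d}B_t$ has joint law $\mu_{0,t}$ of $(z_0,z_t)$. Conditioning on $z_0$ and averaging gives
\[
\partial_t\mu_t=-\mathrm{div}\bigl(\mu_t\,\mathbb{E}[\tilde s_0(z_0)\mid z_t=\cdot\,]\bigr)+\tfrac12\Delta\mu_t .
\]

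\textbf{Step 1: time derivative of the relative entropy.} Differentiate $H_{\nu_t}(\mu_t)=\int\mu_t\log(\mu_t/\nu_t)$. The term $\int \partial_t\mu_t$ vanishes, so
\[
\frac{\mathrm{d}}{\mathrm{d}t}H=\int\partial_t\mu_t\log\frac{\mu_t}{\nu_t}-\int\frac{\mu_t}{\nu_t}\partial_t\nu_t .
\]
Insert both Fokker--Planck equations and integrate by parts, assuming enough regularity and decay. The diffusion parts combine to $-\tfrac12 J_{\nu_t}(\mu_t)$, since $\nu_t$ and $\mu_t$ share the same Laplacian. The drift parts combine to
\[
\mathbb{E}_{\mu_{0,t}}\bigl[\langle \tilde s_0(z_0)-\tilde s_t(z_t),\,\nabla\log\tfrac{\mu_t}{\nu_t}(z_t)\rangle\bigr],
\]
after the tower property removes the conditional expectation. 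The key point is that this is a clean identity: the reference's own drift $\tilde s_t$ cancels only up to this mismatch term. I expect this to be the main obstacle. It requires checking that the $\partial_t\nu_t$ contribution, $-\int \nu_t\langle \tilde s_t,\nabla(\mu_t/\nu_t)\rangle + \tfrac12\int\nu_t\langle\nabla\log\nu_t,\nabla(\mu_t/\nu_t)\rangle$, combines with the $\mu_t$ terms exactly as in the time-homogeneous case. This works because only $\tilde s_t$ (not $\nabla\log\nu_t$) multiplies the divergence term.

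\textbf{Step 2: Young's inequality.} Bound the cross term using $\langle a,b\rangle\le u^2\|a\|^2+\frac{1}{4u^2}\|b\|^2$. Taking expectations, the cross term is at most
\[
u^2\,\mathbb{E}_{\mu_{0,t}}\bigl[\|\tilde s_0(z_0)-\tilde s_t(z_t)\|^2\bigr]+\frac{1}{4u^2}J_{\nu_t}(\mu_t).
\]
Combining with $-\tfrac12 J$ gives \eqref{eq:only-time-disc-err}. The integrated form follows by integrating over $t\in[0,\eta]$, by the fundamental theorem of calculus.
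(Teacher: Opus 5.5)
Your proposal is correct and matches the paper's proof essentially step for step. Both arguments use the two Fokker--Planck equations (with the averaged drift $\mathbb{E}[\tilde s_0(z_0)\mid z_t]$ for the frozen process) to derive $\frac{\mathrm{d}}{\mathrm{d}t}H_{\nu_t}(\mu_t) = -\frac12 J_{\nu_t}(\mu_t) + \mathbb{E}_{\mu_t}[\langle \mathbb{E}[\tilde s_0(z_0)\mid z_t]-\tilde s_t, \nabla\log\frac{\mu_t}{\nu_t}\rangle]$, and then apply Young's inequality. The only cosmetic difference is the order: you remove the conditional expectation with the tower property before applying Young, which gives the joint expectation $\mathbb{E}_{\mu_{0,t}}$ directly, whereas the paper applies Young to the conditional form and passes to the joint expectation implicitly via Jensen.
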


\begin{proof}
For convenience to analyze the behavior at $(\ell-1)\eta+t$ with $t \in [0,\eta]$ which corresponds to the time interval $[(\ell-1)\eta, \ell\eta]$ in original backward process defined in~\eqref{eq:heat-backward}, we restate the SDE and the associated Fokker--Planck equation as:  
for $t \in [0,\eta]$,
\begin{align*}
    \mathrm{d} Z_t^\leftarrow &= \tilde{s}_t(Z_t^\leftarrow)\, \mathrm{d} t + \mathrm{d} B_t, 
    \quad \mathrm{law}(Z_t^\leftarrow) = \nu_t,
    \\
    \partial_t \nu_t &= - \mathrm{div}\bigl(\nu_t \tilde{s}_t \bigr) + \frac{1}{2} \Delta \nu_t 
    = \nabla \cdot \Bigl( \nu_t \bigl( - \tilde{s}_t + \frac{1}{2} \nabla \log \nu_t \bigr) \Bigr),
\end{align*}
where we define the score function
$
\tilde{s}_t \coloneqq \nabla \log \bigl( \pi^X P_{h-(\ell-1)\eta - t} \bigr).
$

In contrast, the time-discretized approximation corresponds to the process
\begin{align}
    \mathrm{d} z_t &= \tilde{s}_0(z_0)\, \mathrm{d} t + \mathrm{d} B_t,
    \quad \mathrm{law}(z_t) = \mu_t,
\label{eq:fixed-drift-dynamics}
\end{align}
where the drift is frozen at the beginning of the step.

The associated Fokker--Planck equation is
\begin{align*}
    \partial_t \mu_t(z_t \mid z_0)
    &= - \mathrm{div}\bigl( \mu_t(z_t \mid z_0)\, \tilde{s}_0(z_0) \bigr)
    + \frac{1}{2} \Delta \mu_t(z_t \mid z_0) \\
    &= \nabla \cdot (\mu_t (z_t \mid z_0) (-\tilde{s}_0(z_0) + \frac{1}{2} \nabla \log \mu_t (z_t \mid z_0)))
    .
\end{align*}

Averaging over $z_0 \sim \mu_0$, this becomes
\begin{align*}
    \partial_t \mu_t
    &=  \nabla \cdot \Bigl( \mu_t \bigl( - \mathbb{E}_{\mu_{0|t}}[\tilde{s}_0(z_0) \mid z_t]
    + \frac{1}{2} \nabla \log \mu_t \bigr) \Bigr).
\end{align*}

The time derivative of the KL divergence is then
\begin{align*}
    \frac{\mathrm{d}}{\mathrm{d}t} H_{\nu_t}(\mu_t)
    &= \frac{\mathrm{d}}{\mathrm{d}t} \int \mu_t \log \frac{\mu_t}{\nu_t} \, \mathrm{d} z \\
    &= \int \Bigl[ (\partial_t \mu_t) \log \frac{\mu_t}{\nu_t}
        + \Bigl( \partial_t \mu_t - \mu_t \frac{\partial_t \nu_t}{\nu_t} \Bigr) \Bigr] \,  \mathrm{d} z \\
    &= \int \Bigl[
        \nabla \!\cdot\! \Bigl( \mu_t \bigl( - \mathbb{E}_{\mu_{0|t}}[\tilde{s}_0(z_0) \mid z_t]
        + \frac{1}{2} \nabla \log \mu_t \bigr) \Bigr) \, \log \frac{\mu_t}{\nu_t}
        \\ &\quad\quad\quad\quad
        - \frac{\mu_t}{\nu_t}
        \, \nabla \!\cdot\! \Bigl( \nu_t \bigl( - \tilde{s}_t + \frac{1}{2} \nabla \log \nu_t \bigr) \Bigr)
        \Bigr] \, \mathrm{d} z \\
    &= \int \Bigl[
        - \bigl\langle \mu_t \bigl(- \mathbb{E}_{\mu_{0|t}}[\tilde{s}_0(z_0) \mid z_t] + \frac{1}{2} \nabla \log \mu_t \bigr),
          \nabla \log \frac{\mu_t}{\nu_t} \bigr\rangle
        \\ & \quad\quad\quad\quad
        + \bigl\langle \nu_t \bigl( - \tilde{s}_t + \frac{1}{2} \nabla \log \nu_t \bigr),
          \frac{\mu_t}{\nu_t} \nabla \log \frac{\mu_t}{\nu_t} \bigr\rangle
        \Bigr] \, \mathrm{d} z \\
    &= \int \Bigl\langle
        - \frac{1}{2} \nabla \log \frac{\mu_t}{\nu_t}
        + \mathbb{E}_{\mu_{0|t}}[\tilde{s}_0(z_0) \mid z_t] - \tilde{s}_t ,
        \, \nabla \log \frac{\mu_t}{\nu_t}
        \Bigr\rangle \, \mu_t \, \mathrm{d} z .
\end{align*}
Here, the third equality substitutes the Fokker--Planck equations for $\partial_t \mu_t$ and $\partial_t \nu_t$
and also uses that the integral of $\partial_t \mu_t$ vanishes due to mass conservation.  
The fourth equality uses integration by parts (assuming sufficiently fast decay at infinity) and the identity.

Simplifying, we obtain
\begin{align*}
    \frac{\mathrm{d}}{\mathrm{d}t} H_{\nu_t}(\mu_t)
    &= - \frac{1}{2} J_{\nu_t}(\mu_t)
    + \mathbb{E}_{\mu_t}\Bigl[ \bigl\langle
    \mathbb{E}_{\mu_{0|t}}[\tilde{s}_0(z_0) \mid z_t] - \tilde{s}_t,
    \nabla \log \frac{\mu_t}{\nu_t} \bigr\rangle \Bigr].
\end{align*}

Applying $\langle a, b \rangle \leq u^2 \|a\|^2 + \frac{1}{4 u^2}\|b\|^2$, we conclude
\begin{align*}
    \frac{\mathrm{d}}{\mathrm{d}t} H_{\nu_t}(\mu_t)
    \leq - \left(\frac{1}{2} - \frac{1}{4u^2}\right) J_{\nu_t}(\mu_t)
    + u^2 \, \mathbb{E}_{\mu_{0,t}(z_0,z_t)}\bigl[\|\tilde{s}_0(z_0) - \tilde{s}_t(z_t)\|^2\bigr].
\end{align*}
\end{proof}

This establishes the discretization error bound of one diffusion step in terms of the deviation between the frozen score $\tilde{s}_0$ and the ideal time-dependent score $\tilde{s}_t$.

\begin{assumption}[Smoothness of the interim distribution]
\label{assump:smoothness}
The interim distribution $\pi^X P_{h - (\ell-1)\eta}$ is $L_{\ell}$-smooth, i.e., the potential gradient $\nabla \log (\pi^X P_{h - (\ell-1)\eta})$ is $L_{\ell}$-Lipschitz.
\end{assumption}

\begin{assumption}[Lipschitz condition along the time direction]
\label{assump:time-lipschitz}
The expected score satisfies a Lipschitz-type condition along the time direction:
\begin{align*}
    \mathbb{E}_{\mu_t} \bigl[\|\tilde{s}_0(z_t) - \tilde{s}_t(z_t)\|^2\bigr]
    \le C_{t,\ell}^2 \, t^2 .
\end{align*}
\end{assumption}

\begin{corollary}
\label{one-step-bound}
Assume $\nu_0 \coloneqq \pi^X P_{(\ell-1)\eta}$ satisfies LSI with constant $C_\text{LSI}(\nu_0)$ and under \autoref{assump:smoothness} and \autoref{assump:time-lipschitz},
\begin{align}
    \frac{\mathrm{d}}{\mathrm{d}t} H_{\nu_t}(\mu_t)
    &\leq - \left(\frac{1}{2} - \frac{1}{4u^2}\right) J_{\nu_t}(\mu_t)
    + u^2 \left(
    4 \eta^2 L_{\ell}^4 \;C_\text{LSI}(\nu_0)\; H_{\nu_0} (\mu_0) + \eta d C
    \right),
    \label{eq:one-step-bound}
    \\
    \text{where}\quad C &= \sup_{l} \sup_{t} 2t L_{\ell}^3 + L_{\ell}^2 + t C_{t,\ell}^2/d
    .
    \label{eq:one-step-bound-sup}
\end{align}
\end{corollary}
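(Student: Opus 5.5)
Starting from Lemma~\ref{lem:discretization}, only the error term $\mathbb{E}_{\mu_{0,t}}[\|\tilde s_0(z_0)-\tilde s_t(z_t)\|^2]$ needs bounding. We follow the approach of \cite{NEURIPS2019_65a99bb7}. First split it by the triangle inequality:
\begin{align*}
\mathbb{E}\|\tilde s_0(z_0)-\tilde s_t(z_t)\|^2
\le 2\,\mathbb{E}\|\tilde s_0(z_0)-\tilde s_0(z_t)\|^2 + 2\,\mathbb{E}_{\mu_t}\|\tilde s_0(z_t)-\tilde s_t(z_t)\|^2 .
\end{align*}
The second piece is at most $2C_{t,\ell}^2t^2$ by \autoref{assump:time-lipschitz}. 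Since $t\le\eta$, this contributes the term $\eta\, t C_{t,\ell}^2/d\cdot d$ inside $\eta d C$ after bounding $t^2\le \eta t$. The constant $2$ can be absorbed or tracked into $C$.

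For the first piece, \autoref{assump:smoothness} gives $\|\tilde s_0(z_0)-\tilde s_0(z_t)\|^2\le L_\ell^2\|z_t-z_0\|^2$. From the frozen-drift dynamics \eqref{eq:fixed-drift-dynamics}, $z_t-z_0=t\tilde s_0(z_0)+B_t$, so
\begin{align*}
\mathbb{E}\|z_t-z_0\|^2 \le 2t^2\,\mathbb{E}_{\mu_0}\|\tilde s_0\|^2 + 2td .
\end{align*}
The term $2L_\ell^2 td$ yields the $L_\ell^2$ piece of $\eta d C$. For $\mathbb{E}_{\mu_0}\|\tilde s_0\|^2$ with $\tilde s_0=\nabla\log\nu_0$, we use the standard estimate, as in \cite{NEURIPS2019_65a99bb7}, Lemma~12:
\begin{align*}
\mathbb{E}_{\mu_0}\|\nabla\log\nu_0\|^2\le 4L_\ell^2\, W_2^2(\mu_0,\nu_0)+2dL_\ell .
\end{align*}
This follows from $\mathbb{E}_{\nu_0}\|\nabla\log\nu_0\|^2\le dL_\ell$ together with an optimal coupling and the Lipschitz property. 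Talagrand's inequality, implied by the LSI of $\nu_0$, then gives $W_2^2(\mu_0,\nu_0)\le 2C_\text{LSI}(\nu_0)H_{\nu_0}(\mu_0)$. Combining, the first piece is bounded by a multiple of
\begin{align*}
t^2L_\ell^4 C_\text{LSI}(\nu_0)H_{\nu_0}(\mu_0)+t^2 dL_\ell^3+tdL_\ell^2 .
\end{align*}
Bounding $t\le\eta$ and collecting the $d$-linear terms into $\eta d C$, with $C$ as in \eqref{eq:one-step-bound-sup}, gives \eqref{eq:one-step-bound}. One small point is that $\tilde s_0$ is evaluated at the start of step $\ell$, so the relevant reference is $\pi^X P_{h-(\ell-1)\eta}$. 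We identify this with $\nu_0$ as stated; the corollary's notation $\pi^X P_{(\ell-1)\eta}$ is taken as referring to that same interim law.

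\textbf{Main obstacle.} The main difficulty is matching the numerical constants exactly: the $4$ in front of $\eta^2L_\ell^4$ and the coefficients $2tL^3$, $L^2$ and $tC^2/d$. The factors of $2$ from the repeated Young/triangle splits must be arranged so they are absorbed. I would first try using $\|a+b\|^2\le 2\|a\|^2+2\|b\|^2$ only once, and use the independence of $B_t$ from $z_0$ to avoid a second factor of 2 in $\mathbb{E}\|z_t-z_0\|^2$ (the cross term vanishes). If the constants still do not match, I would state the bound with the looser absolute constants, since only the order matters downstream.
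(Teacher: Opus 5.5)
Your proposal is correct and follows essentially the paper's own proof. The paper also splits off the time-variation term (bounded by \autoref{assump:time-lipschitz}), uses $L_\ell$-Lipschitzness together with $\mathbb{E}\|z_t-z_0\|^2=t^2\,\mathbb{E}_{\mu_0}\|\tilde s_0\|^2+td$ (the cross term vanishes by independence of the Brownian increment), invokes Lemma~12 of \cite{NEURIPS2019_65a99bb7} through Talagrand's inequality, and finishes with $t\le\eta$. Your reading of $\nu_0$ as $\pi^X P_{h-(\ell-1)\eta}$ is also right: the subscript in the statement is a typo, cf.\ \autoref{assump:smoothness} and the proof of \autoref{prop2}.

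On constants, the paper gets exactly $4\eta^2L_\ell^4$ and $C$ only because it splits $\mathbb{E}\|\tilde s_0(z_0)-\tilde s_t(z_t)\|^2$ into the two pieces without the factor $2$. That step is not valid for squared norms, so your version, with the whole error term doubled, is the rigorous one, and the extra factor is harmless downstream. One correction: your coupling bound should read $2L_\ell^2W_2^2(\mu_0,\nu_0)+2dL_\ell$, not $4L_\ell^2W_2^2(\mu_0,\nu_0)+2dL_\ell$. Combined with $W_2^2\le 2C_{\mathrm{LSI}}(\nu_0)H_{\nu_0}(\mu_0)$, this gives exactly Lemma~12's $4L_\ell^2C_{\mathrm{LSI}}(\nu_0)H_{\nu_0}(\mu_0)+2dL_\ell$, without your additional factor $2$ on that term.
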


\begin{proof}

We bound the second term of the right hand side in the inequality in \autoref{lem:discretization} using \autoref{assump:smoothness} and \autoref{assump:time-lipschitz},
\begin{align*}
\mathbb{E}_{\mu_{0,t}(z_0,z_t)}\bigl[\|\tilde{s}_0(z_0) - \tilde{s}_t(z_t)\|^2\bigr]
&\le
\mathbb{E}_{\mu_{0,t}(z_0,z_t)}\bigl[\|\tilde{s}_0(z_0) - \tilde{s}_0(z_t)\|^2\bigr] + \mathbb{E}_{\mu_t}\bigl[\|\tilde{s}_0(z_t) - \tilde{s}_t(z_t)\|^2\bigr]
\\
&\le
L_\ell^2 \mathbb{E}_{\mu_{0,t}(z_0,z_t)}\bigl[\|z_0 - z_t\|^2\bigr]
+ C_{t,\ell}^2 \, t^2 
.
\end{align*}

Under the discretization update
\begin{align*}
    z_t = z_0 + \tilde s_0(x_0)\, t + \sqrt{t}\,\xi, 
    \quad \xi \sim \mathcal{N}(0,I_d),
\end{align*}
we have
\begin{align*}
    \mathbb{E}_{\mu_{0,t}(z_0,z_t)}\bigl[\|z_0 - z_t\|^2\bigr]
    &= \mathbb{E}_{\mu_0}[ \| \tilde s_0(z_0)\, t + \sqrt{t}\,\xi \|^2 ] \\
    &= t^2 \, \mathbb{E}_{\mu_0}[ \| \tilde s_0(z_0) \|^2 ] + t d \\
    &\le t^2 (4 L_\ell^2  \;C_\text{LSI}(\nu_0) \; H_{\nu_0} (\mu_0) + 2dL_\ell) + td.
\end{align*}
The last inequality comes from Lemma 12 in \cite{NEURIPS2019_65a99bb7} with \autoref{assump:smoothness} and $\nu_0$ satisfying Talagrand’s inequality with constant $C_\text{LSI}(\nu_0)^{-1}$.
Putting them all together, we obtain \eqref{eq:one-step-bound} where $C$ is independent of $\ell$ by taking the supremum as \eqref{eq:one-step-bound-sup}.
\end{proof}

\begin{proposition}[One-step bound for the diffusion-approximated proximal sampler without score estimation error]
\label{prop2}
Assume \autoref{assump:smoothness} and \autoref{assump:time-lipschitz} hold, and that $\pi^X$ satisfies an LSI with constant $C_{LSI}(\pi^X)$.
Let $C$ be as in~\eqref{eq:one-step-bound-sup}.
Let overall step size $h>0$ be split into $T$ steps with $\eta \coloneqq h/T$.
In the regime $N,M \to \infty$ where the score estimation error vanishes, the distribution at the $k$-th iteration $\rho^X_k$ satisfies
\begin{align*}
    &H_{\pi^X}(\rho^X_{k+1})
    \le
    \frac{H_{\pi^X}(\rho^X_k)}{(1+h/C_\text{LSI}(\pi^X))^{2-1/(2u^2)}}
    \\
    &\quad\quad\quad\quad
    + 2u^4 C_\text{LSI}(\pi^X)
    \bigl( (1+h/C_{LSI}(\pi^X))^{1/(2u^2)} - 1 \bigr) \Bigl\{ 4 \eta^2 \bar{L}^4 (C_{LSI}(\pi^X)+h) \bar{H} + \eta d C \Bigr\}
\end{align*}
for $u\ge1$, where $\bar{H}$ is the supremum of KL divergence between the interim distribution of the backward denoised distribution at timestep $t = \ell \eta$ and $\pi^Y Q_{\ell \eta}$ and $\bar L = \sup_\ell L_\ell$
\end{proposition}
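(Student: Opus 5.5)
We plan to prove Proposition~\ref{prop2} by combining the forward contraction \eqref{eq:forward-contraction} of Lemma~\ref{lem:forward-backward} with an approximate version of the backward contraction \eqref{eq:backward-contraction}. The approximate backward bound comes from integrating the differential inequality of Corollary~\ref{one-step-bound} across the $T$ substeps. Since the forward step is exact, $H_{\pi^Y}(\rho^Y_{k+1/2})\le H_{\pi^X}(\rho^X_k)/(1+h/C_{LSI})$. It therefore suffices to show the backward estimate
\[
H_{\pi^X}(\rho^X_{k+1})\le \frac{H_{\pi^Y}(\rho^Y_{k+1/2})}{(1+h/C_{LSI})^{1-1/(2u^2)}}+\alpha_u\Lambda_1^{(k)}.
\]
Here we write $C_{LSI}=C_{LSI}(\pi^X)$.

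\textbf{Step 1: LSI for the backward reference measures.} Along the ideal backward path, the reference measure at elapsed time $s$ is $\nu_s=\pi^X P_{h-s}$. Gaussian convolution preserves LSI with constant $C_{LSI}+(h-s)$. We insert the LSI $J_{\nu_s}(\mu_s)\ge 2H_{\nu_s}(\mu_s)/(C_{LSI}+h-s)$ into Corollary~\ref{one-step-bound}, set $c_u:=1-\frac{1}{2u^2}$, and use the bound $H_{\nu_0}(\mu_0)\le\bar H$ at each substep start. This gives
\[
\frac{\mathrm{d}}{\mathrm{d}s}H_{\nu_s}(\mu_s)\le -\frac{c_u}{C_{LSI}+h-s}H_{\nu_s}(\mu_s)+u^2 E,
\qquad E:=4\eta^2\bar L^4(C_{LSI}+h)\bar H+\eta dC .
\]
We have used $C_{LSI}(\nu_0)\le C_{LSI}+h$ to make $E$ uniform in $\ell$. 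The substeps concatenate continuously, because the frozen-drift law at the end of substep $\ell$ is the start law of substep $\ell+1$ and the reference measure is continuous in $s$. The inequality therefore holds on all of $[0,h]$.

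\textbf{Step 2: Grönwall on $[0,h]$.} We take the integrating factor $\Phi(s)=(C_{LSI}+h-s)^{-c_u}$, which satisfies $\Phi'/\Phi=c_u/(C_{LSI}+h-s)$. This yields
\[
H_{\pi^X}(\rho^X_{k+1})\le \Big(\tfrac{C_{LSI}}{C_{LSI}+h}\Big)^{c_u}H_{\pi^Y}(\rho^Y_{k+1/2})+u^2E\,C_{LSI}^{c_u}\int_0^h (C_{LSI}+h-s)^{-c_u}\,\mathrm{d}s .
\]
The first term is exactly the claimed contraction factor $(1+h/C_{LSI})^{-(1-1/(2u^2))}$. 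The integral equals $\frac{(C_{LSI}+h)^{1-c_u}-C_{LSI}^{1-c_u}}{1-c_u}$. Multiplying it by $C_{LSI}^{c_u}$ and using $1-c_u=1/(2u^2)$, the error term becomes $u^2E\cdot 2u^2C_{LSI}\big((1+h/C_{LSI})^{1/(2u^2)}-1\big)=\alpha_u E$. This matches the stated constant. Composing with the forward step gives the exponent $2-1/(2u^2)$ on the first term. Our bound places no forward contraction on the additive error term, so the error term is $\alpha_u E$ as stated.

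\textbf{Main obstacle.} The delicate point is Step 1, and specifically the justification of the quantities we feed into Corollary~\ref{one-step-bound}. First, that corollary bounds $\mathbb{E}\|\tilde s_0\|^2$ using $H_{\nu_0}(\mu_0)$ at the start of each substep. The proposition replaces this with the uniform $\bar H$, defined relative to $\pi^YQ_{\ell\eta}$, so we must check that $\pi^YQ_{s}=\pi^XP_{h-s}$ and that these are the same reference measures. Second, we need the LSI constants of $\nu_0$ to be bounded uniformly by $C_{LSI}+h$. Finally, there is an index mismatch between the notation $\pi^X P_{(\ell-1)\eta}$ in the corollary and the backward elapsed time; we would resolve it by reading the reference at elapsed time $(\ell-1)\eta$ as $\pi^XP_{h-(\ell-1)\eta}$. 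The remaining steps are routine calculus.
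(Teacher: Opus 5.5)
Your proposal is correct and follows the paper's proof essentially step for step. The shared ingredients are:
\begin{itemize}
\item the LSI bound $C_{LSI}(\pi^X P_{h-s})\le C_{LSI}(\pi^X)+h-s$ inserted into Corollary~\ref{one-step-bound};
\item the integrating factor $(C_{LSI}(\pi^X)+h-s)^{-1+1/(2u^2)}$;
\item uniformizing the error term via $\bar L$, $\bar H$ and $C_{LSI}(\nu_0)\le C_{LSI}(\pi^X)+h$;
\item composing with the forward contraction \eqref{eq:forward-contraction} at $q=1$.
\end{itemize}
The only difference is cosmetic: you run Gronwall once on the concatenated interval $[0,h]$, whereas the paper integrates over each substep $[0,\eta]$ and telescopes using $\alpha_\ell-\eta=\alpha_{\ell+1}$. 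This is the same computation.
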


\begin{proof}

As shown in Corollary~13 of~\cite{10.1215/kjm/1250283556},
the logarithmic Sobolev constant satisfies
\begin{align*}
    C_{LSI}(\pi^X P_t) \leq C_{LSI}(\pi^X) + t .
\end{align*}

For $t \in [0,\eta]$, the law $\nu_t$ corresponds to the ideal denoising path at time $(\ell-1)\eta+t$,  
namely
$\nu_t = \pi^X P_{h-(\ell-1)\eta - t}$.
Hence its logarithmic Sobolev constant can be bounded as
\begin{align*}
    C_{LSI}(\nu_t) \leq C_{LSI}(\pi^X) + h - (\ell-1)\eta - t =: \alpha_\ell - t.
\end{align*}

From \autoref{one-step-bound} combined with LSI, we obtain
\begin{align*}
    \frac{\mathrm{d}}{\mathrm{d}t} H_{\nu_t}(\mu_t)
    \leq - \frac{1 - 1/(2u^2)}{\alpha_\ell - t} H_{\nu_t}(\mu_t)
    + u^2 [4 \eta^2 L_\ell^4 C_{LSI}(\nu_0) H_{\nu_0}(\mu_0)
    + \eta d C ] .
\end{align*}

Applying Gronwall's inequality, where we multiply by the integrating factor:
$(\alpha_\ell - t)^{-1+1/(2u^2)}$,
\begin{align*}
    \frac{\mathrm{d}}{\mathrm{d}t}\Bigl\{ (\alpha_\ell-t)^{-1+1/(2u^2)} H_{\nu_t}(\mu_t) \Bigr\}
    \leq (\alpha_\ell-t)^{-1+1/(2u^2)}
    u^2
    \Bigl[ 4 \eta^2 L_\ell^4 C_{LSI}(\nu_0) H_{\nu_0}(\mu_0) + \eta d C \Bigr]
\end{align*}
and integrating over $t \in [0,\eta]$, we obtain
\begin{align*}
    &(\alpha_\ell-\eta)^{-1+1/(2u^2)} H_{\nu_\eta}(\mu_\eta)
    - \alpha_\ell^{-1+1/(2u^2)} H_{\nu_0}(\mu_0) \\
    &\quad\quad
    \leq 2 u^4(\alpha_\ell^{1/(2u^2)} - (\alpha_\ell-\eta)^{1/(2u^2)})
    \Bigl\{ 4 \eta^2 L_\ell^4 C_{LSI}(\nu_0) H_{\nu_0}(\mu_0) + \eta d C \Bigr\}.
\end{align*}

Recall that $\mu_t, \nu_t$ are defined in the $\ell$-th step of the time-discretized diffusion.
Using the fact that $\mu_\eta, \nu_\eta$ are equivalent to $\mu_0, \nu_0$ in the $(\ell+1)$-th step and $\alpha_\ell - \eta = \alpha_{\ell+1}$, we iterate this inequality over $\ell = 1,\ldots,T$ to obtain
\begin{align*}
    &C_\text{LSI}(\pi^X)^{-1+1/(2u^2)}         H_{\pi^X}(\rho^X_{k+1})
    - (C_\text{LSI}(\pi^X) + h)^{-1+1/(2u^2)}  H_{\pi^Y}(\rho^Y_{k+1/2})
    \\
    &\leq 2u^4  \bigl( (C_\text{LSI}(\pi^X)+h)^{1/(2u^2)} - C_\text{LSI}(\pi^X)^{1/(2u^2)} \bigr) \Bigl\{ 4 \eta^2 \bar{L}^4 (C_\text{LSI}(\pi^X)+h) \bar{H} + \eta d C \Bigr\},
\end{align*}
where $\bar H \coloneqq \sup_l H_{\nu_0}(\mu_0)$, the supremum of the KL divergence between the updated distribution in each diffusion step and the corresponding ideal distribution without discretization error
and $\bar L \coloneqq \sup_\ell L_\ell$.

Equivalently,
\begin{align*}
    &H_{\pi^X}(\rho^X_{k+1})
    \le
    \frac{H_{\pi^Y}(\rho^Y_{k+1/2})}{(1+h/C_\text{LSI}(\pi^X))^{1-1/(2u^2)}}
    \\
    &\quad\quad\quad\quad
    + 2u^4 C_\text{LSI}(\pi^X)
    \bigl( (1+h/C_{LSI}(\pi^X))^{1/(2u^2)} - 1 \bigr) \Bigl\{ 4 \eta^2 \bar{L}^4 (C_{LSI}(\pi^X)+h) \bar{H} + \eta d C \Bigr\}.
\end{align*}
This corresponds to the time-discretized version of the KL contraction in \eqref{eq:backward-contraction} with $q=1$.
We complete the proof by applying inequality \eqref{eq:forward-contraction} in \autoref{lem:forward-backward} with $q=1$.

\end{proof}

\clearpage
\subsection{Analysis for Score Estimation Error}
\label{sec:appendix-score-error}

In this section, we evaluate the error of score estimation under a set of assumptions.
In particular, we prove that the estimation error scales as $O(1/N + 1/M)$, which converges to zero in the limit of $N, M \to \infty$.

Recall that $\hat q_{k+1/2} (y\mid X_k) = \frac{1}{N} \sum_i \mathcal{N}(y;x_i, h I_d)$ and we further define (unnormalized) Gaussian mixture density 
\begin{align*}
\hat{g}^{k+\frac{1}{2}}_{N}(x \mid U)
\coloneqq
\frac{1}{N} \sum_{j=1}^{N}
\frac{\mathcal{N}(x; y_j, h I_d)}{\hat{q}_{k+\frac{1}{2}}(y_j\mid X_k)},
\end{align*}
which is conditioned on $U = (Y_{k+1/2}, X_k)$ s.t. $X_k \sim_N \rho^X_k, Y_{k+1/2}\sim_N \hat{q}_{k+1/2}(\cdot|X_k)$.

At time $t$, the true score in the surrogate dynamics is expressed as
\begin{align*}
    \hat{s}_{N}(z_t,t) = \nabla \log \bigl( (\hat{g}^{k+\frac{1}{2}}_{N}\mid_{U} e^{-f})  P_{h- t}\bigr)(z_t).
\end{align*}
We approximate this by
\begin{align}
    \hat{s}_{N,M}(z_t, t)
    &= \frac{\frac{1}{M} \sum_{m=1}^M \exp\bigl(-f(\hat{z}^{(m)})\bigr) (\hat{z}^{(m)} - z_t)/(h-t)}
            {\frac{1}{M} \sum_{m=1}^M \exp\bigl(-f(\hat{z}^{(m)})\bigr)},
    \quad \hat{z}^{(m)} \sim \hat{g}^{k+\frac{1}{2}}_{N}|_{U}(\cdot \mid z_t).
\label{eq:est-score}
\end{align}

Compared to the setting with only time discretization in \autoref{sec:time-disc-err}, the drift term $\tilde s_0$ in one denoising step of the discretized dynamics~\eqref{eq:fixed-drift-dynamics} is replaced by the estimator $\hat{s}_{N,M}(\cdot, (\ell-1)\eta)$ given in \eqref{eq:est-score}.
Using this substitution, the statement of \autoref{lem:discretization}, originally expressed as \eqref{eq:only-time-disc-err}, is modified accordingly as follows:
\begin{align*}
\frac{\mathrm{d}}{\mathrm{d}t} H_{\nu_t}(\mu_t)
\;\le\;
-\Bigl(\frac{1}{2}-\frac{1}{4u^2}\Bigr)\, J_{\nu_t}(\mu_t)
\;+\;
u^2 \, \mathbb{E}_{\mu_{0,t}}\!\bigl[\|\hat{s}_{N,M}(z_0, (\ell-1)\eta)-\tilde{s}_t(z_t)\|^2 \mid U, \hat{Z}|_U \bigr]
\end{align*}
where $U \sim \hat q_{k+1/2, k}$ and $\hat{Z}|_U \sim_M \hat{g}^{k+1/2}_{N}|_{U}(\cdot \mid z_0)$.

The expectation in the second term on the right-hand side is evaluated as:
\begin{align}
&\mathbb{E}_{\mu_{0,t}}\!\bigl[\|\hat{s}_{N,M}(z_0, (\ell-1)\eta)-\tilde{s}_t(z_t)\|^2 \mid U, \hat{Z}|_U \bigr]
\notag \\
&\quad\quad \le
2\mathbb{E}_{\mu_{0,t}}\!\bigl[\|\tilde{s}_0(z_0) -\tilde{s}_t(z_t)\|^2 \bigr]
+
2\mathbb{E}_{\mu_0}\!\bigl[\|\hat{s}_{N,M}(z_0, (\ell-1)\eta)-\tilde{s}_0(z_0)\|^2 \mid U, \hat{Z}|_U \bigr] \notag \\
&\quad\quad \le
2\mathbb{E}_{\mu_{0,t}}\!\bigl[\|\tilde{s}_0(z_0) -\tilde{s}_t(z_t)\|^2 \bigr]
+
4 \mathbb{E}_{\mu_0}\!\bigl[\|\hat{s}_{N}(z_0, (\ell-1)\eta)-\tilde{s}_0(z_0)\|^2 \mid U \bigr]
\notag \\&\quad\quad\quad\quad 
+
4 \mathbb{E}_{\mu_0}\!\bigl[\|\hat{s}_{N,M}(z_0, (\ell-1)\eta)-\hat{s}_{N}(z_0, (\ell-1)\eta)\|^2 \mid U, \hat{Z}|_U \bigr],
\label{eq:error-decomp}
\end{align}
which means we can independently evaluate the errors from time discretization (the first term), from finite $N$ (the second term) and from finite $M$ (the third term).

\paragraph{Score error from finite $N$.}
Let $\tau \coloneqq h-(\ell-1)\eta$, so that $z_\tau$ coincides with $z_0$, the initial sample of the $\ell$-th denoising step.
For $z_0 \in \mathbb{R}^d$, define the conditional law, where denoised $\hat{z}$ is conditioned on $z_\tau$, 
\begin{align*}
    \pi^X(\hat{z}\mid z_\tau)
    \;\propto\;
    \exp\Bigl(-f(\hat{z}) - \frac{\|\hat{z}-z_\tau\|^2}{2\tau}\Bigr),
\end{align*}
with normalizer
\begin{align*}
    Z(z_\tau)
    \coloneqq
    \int \exp\Bigl(-f(\hat{z}) - \frac{\|\hat{z}-z_\tau\|^2}{2\tau}\Bigr)\, \mathrm{d} \hat{z}.
\end{align*}
Let
\begin{align*}
    G(z_\tau \mid U)
    &\;\coloneqq\;
    \frac{1}{Z(z_\tau)}
    \int \hat{g}^{k+\frac{1}{2}}_{N}(\hat{z}\mid U)\,
         \exp\Bigl(-f(\hat{z}) - \frac{\|\hat{z}-z_\tau\|^2}{2\tau}\Bigr)\, \mathrm{d} \hat{z} \\
    &\ \, =\; \int \hat g^{k+\frac12}_{N}(\hat z \mid U)\, \pi^X(\hat z \mid z_\tau) \, d\hat z.
\end{align*}

Here we assume:
\begin{assumption}[Bounded Covariance of the conditioned target distribution]
For all $\ell = 1, \cdots, T$ and $z_\tau$, there exists $C_V > 0$ s.t. $\sup \mathrm{Var}_{\pi^X(\hat{z}\mid z_\tau)}[\hat{z}] \;\le\; C_V$.
\label{finite-N-cov}
\end{assumption}

\begin{assumption}[Lower bound of density ratio between the perturbed distributions]
\begin{align*}
\frac{\pi^X(\hat{z}\mid z_\tau) * \mathcal{N}(0,h I_d)(u)}{\hat{q}_{k+1/2}(u\mid X_k)} \;\ge\; \kappa \;>\;0
\end{align*}
for a.e. $u$, hence $G(z_\tau \mid U) \ge \kappa$.
\label{finite-N-ratio}
\end{assumption}

\begin{lemma}[finite-$N$ error is $O(1/N)$]
\label{lem:finite-N}
Under \autoref{finite-N-cov} and \autoref{finite-N-ratio},
the expectation of the error term induced by the finite base sample size $N$ satisfies
\begin{align*}
\mathbb{E}&_{\mu_0, \hat{q}_{k+1/2}}\!\Bigl[\|\hat{s}_{N}(z_\tau,h-\tau)-s_{h-\tau}(z_\tau)\|^2 \Bigr]
\\
&\le
\frac{2 \kappa^{-2} C_V}{N\tau^2}
\mathbb{E}_{\mu_0,\rho^X_k\!,\pi^X\!(\hat{z}\mid z_\tau)} \!
\Bigl[
\chi^2\!\bigl(\mathcal{N}(\hat{z},h I_d)\,\|\,\hat{q}_{k+\frac12}|_{X_k}\bigr)
+
\chi^2\!\bigl(\pi^X(\hat{z} | z_\tau) * \mathcal{N}(0,h I_d)\,\|\,\hat{q}_{k+\frac12}|_{X_k}\bigr)
\Bigr].
\end{align*}
In particular, under bounded $\chi^2$-divergences, the second term on the right-hand side of the inequality \eqref{eq:error-decomp} is $O(1/N)$.
\end{lemma}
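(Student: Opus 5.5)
Both scores can be written as posterior means of $(\hat z - z_\tau)/\tau$, so the plan is a ratio-of-estimators argument. The true score is $s_{h-\tau}(z_\tau)=(\mathbb{E}_{\pi^X(\hat z\mid z_\tau)}[\hat z]-z_\tau)/\tau$. The surrogate score reweights the same posterior by $\hat g_N$:
\begin{align*}
\hat s_N(z_\tau,h-\tau)=\frac{1}{\tau}\left(\frac{\int \hat g_N(\hat z\mid U)\,\hat z\,\pi^X(\hat z\mid z_\tau)\,\mathrm{d}\hat z}{G(z_\tau\mid U)}-z_\tau\right).
\end{align*}
Writing $m=\mathbb{E}_{\pi^X(\cdot\mid z_\tau)}[\hat z]$, the $z_\tau$ terms cancel in the difference, and
\begin{align*}
\hat s_N-s=\frac{1}{\tau\,G}\int (\hat g_N(\hat z\mid U)-\bar g)\,(\hat z-m)\,\pi^X(\hat z\mid z_\tau)\,\mathrm{d}\hat z,
\end{align*}
for any constant $\bar g$. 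This holds because $\int(\hat z-m)\,\pi^X(\hat z\mid z_\tau)\,\mathrm{d}\hat z=0$. I will take $\bar g=1$, which is the limiting value of $\hat g_N$ computed in Appendix~\ref{appendix:score-derivation}. Using $G\ge\kappa$ from \autoref{finite-N-ratio} and Cauchy--Schwarz in $\hat z$ against the conditional variance (\autoref{finite-N-cov}) gives
\begin{align*}
\|\hat s_N-s\|^2\le \frac{C_V}{\kappa^2\tau^2}\,\mathbb{E}_{\pi^X(\hat z\mid z_\tau)}\bigl[(\hat g_N(\hat z\mid U)-1)^2\bigr].
\end{align*}
This already produces the prefactor $\kappa^{-2}C_V/\tau^2$.

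It remains to bound $\mathbb{E}_U[(\hat g_N(\hat z\mid U)-1)^2]$ for fixed $\hat z$ and $X_k$. Conditionally on $X_k$, $\hat g_N(\hat z)=\frac1N\sum_j \phi(y_j)$ with $\phi(y)=\mathcal{N}(\hat z;y,hI_d)/\hat q_{k+1/2}(y\mid X_k)$ and $y_j$ i.i.d.\ from $\hat q_{k+1/2}$. Its mean is $\int \mathcal{N}(\hat z;y,hI_d)\,\mathrm{d}y=1$, so the bias vanishes. The variance is $\frac1N\mathrm{Var}(\phi)=\frac1N\chi^2(\mathcal{N}(\hat z,hI_d)\,\|\,\hat q_{k+1/2}|_{X_k})$, which gives the first $\chi^2$ term.

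The second $\chi^2$ term, against the smoothed posterior $\pi^X(\hat z\mid z_\tau)*\mathcal{N}(0,hI_d)$, enters through the denominator $G$. Replacing $1/G$ by $1/\kappa$ directly loses nothing, so this term does not appear in that route. I will therefore follow the more careful route of the statement: split
\begin{align*}
\frac{a}{G}-\frac{a_\infty}{1}=\frac{a-a_\infty}{G}+a_\infty\,\frac{1-G}{G}
\end{align*}
and use $(x+y)^2\le 2x^2+2y^2$, which accounts for the factor $2$. The quantity $G-1=\frac1N\sum_j(\psi(y_j)-1)$ has $\psi=(\pi^X(\cdot\mid z_\tau)*\mathcal{N}(0,hI_d))/\hat q_{k+1/2}$, and its variance is exactly $\frac1N\chi^2(\pi^X(\cdot\mid z_\tau)*\mathcal{N}\,\|\,\hat q_{k+1/2})$. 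The centered numerator term, after Cauchy--Schwarz with $C_V$, gives the first $\chi^2$ as above. Finally, I take expectations over $z_\tau\sim\mu_0$, $X_k\sim\rho_k^X$ and $\hat z$, using Fubini.

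The main obstacle is the bookkeeping that keeps only $C_V$ and $\kappa^{-2}$, and not moments of $\hat z$, in front of the $G-1$ term. I expect to handle it by always centering at $m$, so that the $a_\infty$ piece is the posterior mean of $\hat z-m$ and vanishes. If that does not close, I fall back on the single-term bound above, which is dominated by the stated right-hand side.
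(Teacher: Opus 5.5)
Your proof is correct, and it gives a slightly sharper bound than the lemma. Its skeleton matches the paper's:
\begin{itemize}
\item Tweedie's formula writes both scores as posterior means over $\pi^X(\hat z\mid z_\tau)$;
\item the integrand is centered at $m=\bar c(z_\tau)$;
\item Cauchy--Schwarz is applied against $C_V$;
\item the pointwise bound $G\ge\kappa$ controls the denominator;
\item an average of $N$ i.i.d.\ importance weights has variance $\chi^2/N$.
\end{itemize}

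The one genuine difference is the constant you subtract from $\hat g_N$. The paper writes the error as $\tfrac1\tau\int(\hat z-\bar c)\,(\hat g_N/G-1)\,\pi^X(\hat z\mid z_\tau)\,\mathrm{d}\hat z$, so it effectively takes $\bar g=G$. This yields $\kappa^{-2}\int(\hat g_N-G)^2\pi^X\,\mathrm{d}\hat z$. The paper then splits $(\hat g_N-G)^2\le 2(\hat g_N-1)^2+2(G-1)^2$. That split is the sole origin of both the factor $2$ and the second divergence, since $G=\frac1N\sum_j\psi(y_j)$ with $\psi=(\pi^X(\cdot\mid z_\tau)*\mathcal{N}(0,hI_d))/\hat q_{k+1/2}$.

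Your choice $\bar g=1$ goes directly to $\int(\hat g_N-1)^2\pi^X\,\mathrm{d}\hat z$. Because $\int\hat g_N\,\pi^X\,\mathrm{d}\hat z=G$, one has the exact identity $\int(\hat g_N-1)^2\pi^X\,\mathrm{d}\hat z=\int(\hat g_N-G)^2\pi^X\,\mathrm{d}\hat z+(G-1)^2$. So the paper's intermediate quantity is pointwise at most yours, but yours is at most what the paper's split replaces it with. Your single-term bound $\frac{\kappa^{-2}C_V}{N\tau^2}\,\mathbb{E}\bigl[\chi^2(\mathcal{N}(\hat z,hI_d)\,\|\,\hat q_{k+1/2}|_{X_k})\bigr]$ is therefore better by a factor of $2$. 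It also shows the second $\chi^2$ term is unnecessary, and it implies the stated inequality because $\chi^2\ge 0$.

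Your middle paragraph, the ``more careful route'', misplaces where the second term comes from. It does not arise from expanding the denominator $1/G$ around $1$; it arises from centering at $G$. As you suspected, in your split $a_\infty=\int(\hat z-m)\,\pi^X(\hat z\mid z_\tau)\,\mathrm{d}\hat z=0$ identically, so that route just reproduces your first bound. You can drop it and present the single-term bound, followed by Fubini over $z_\tau$, $X_k$ and $\hat z$, as the proof.
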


\begin{proof}
By Tweedie’s formula at time $\tau = h-(\ell-1)\eta$, we have
\begin{align*}
\mathbb{E}_{\mu_0}\!\Bigl[\|\hat{s}_{N}(z_\tau,h-\tau)-s_{h-\tau}(z_\tau)\|^2 \,\Big|\, U\Bigr]
&=
\frac{1}{\tau^2}\,
\mathbb{E}_{\mu_0}\!\left[
\Bigl\|
\mathbb{E}_{\hat{z}\sim \hat{g}^{k+\frac{1}{2}}_{N}|_U(\cdot\mid z_\tau)}[\hat{z}]
-
\mathbb{E}_{\hat{z}\sim \pi^X(\cdot\mid z_\tau)}[\hat{z}]
\Bigr\|^2
\right]
\\
&=
\frac{1}{\tau^2}\,
\mathbb{E}_{\mu_0}\!\left[
\Bigl\|
\int \hat{z}\Bigl(\frac{\hat{g}^{k+\frac{1}{2}}_{N}(\hat{z}\mid U)}{G(z_\tau \mid U)}-1\Bigr)\,\pi^X(\hat{z}\mid z_\tau)\, d\hat{z}
\Bigr\|^2
\right].
\end{align*}
Here we used that
\(
\hat g^{k+\frac12}_{N}|_U(\hat z \mid z_\tau)
\propto
\hat g^{k+\frac12}_{N}(\hat z \mid U)
\exp\!\left(
- f(\hat z)
- \frac{\|\hat z - z_\tau\|^2}{2\tau}
\right),
\)
and substituted the definition of $ G(z_\tau \mid U) $ to obtain the normalized form.

Let $\bar{c}(z_\tau) \coloneqq \int \hat{z}\,\pi^X(\hat{z}\mid z_\tau)\, \mathrm{d} \hat{z}$.
Since $\int (\tfrac{\hat{g}^{k+1/2}_{N}(\hat{z}\mid U)}{G(z_\tau \mid U)}-1)\,\pi^X(\hat{z}\mid z_\tau)\, d\hat{z}=0$, we can center the integrand and apply Cauchy–Schwarz to obtain
\begin{align*}
\mathbb{E}_{\mu_0}\!\Bigl[\|\hat{s}_{N}(z_\tau,h-\tau)&-s_{h-\tau}(z_\tau)\|^2 \,\Big|\, U\Bigr]
\\
&=
\frac{1}{\tau^2}\,
\mathbb{E}_{\mu_0}\!\left[
\Bigl\|
\int \Bigl( \hat{z} - \bar{c}(z_\tau) \Bigr)\Bigl(\frac{\hat{g}^{k+\frac{1}{2}}_{N} (\hat{z}\mid U)}{G(z_\tau \mid U)}-1\Bigr)\,\pi^X(\hat{z}\mid z_\tau)\, d\hat{z}
\Bigr\|^2
\right]
\\
&\le
\frac{1}{\tau^2}\,
\mathbb{E}_{\mu_0}\!\bigl[
\mathrm{Var}_{\pi^X(\hat{z}\mid z_\tau)}[\hat{z}]\cdot
\mathrm{Var}_{\pi^X(\hat{z}\mid z_\tau)}\!\bigl[\frac{\hat{g}^{k+\frac{1}{2}}_{N}(\hat{z}\mid U)}{G(z_\tau \mid U)}\bigr]
\bigr]
\\
&\le
\frac{C_V}{\tau^2}\,
\mathbb{E}_{\mu_0}\!\left[
\int \frac{1}{G(z_\tau \mid U)^2}\bigl(\hat{g}^{k+\frac{1}{2}}_{N}(\hat{z}\mid U)-G(z_\tau \mid U)\bigr)^2\,\pi^X(\hat{z}\mid z_\tau)\, \mathrm{d} \hat{z}
\right]
\\
&\le
\frac{\kappa^{-2} C_V}{\tau^2}\,
\mathbb{E}_{\mu_0}\!\left[
\int \bigl(\hat{g}^{k+\frac{1}{2}}_{N}(\hat{z}\mid U)-G(z_\tau \mid U)\bigr)^2\,\pi^X(\hat{z}\mid z_\tau)\, \mathrm{d} \hat{z}
\right].
\end{align*}
The second inequality is from \autoref{finite-N-cov}, and \autoref{finite-N-ratio}
yields the last inequality.
Using the unbiasedness identities with $u_j \sim \hat{q}_{k+1/2} (\cdot \mid X_k)$,
\begin{align*}
\mathbb{E}_{u_j \sim \hat{q}_{k+1/2} (\cdot \mid X_k)}\!\left[\frac{\mathcal{N}(\hat{z};u_j,h I_d)}{\hat{q}_{k+1/2}(u_j \mid X_k)}\right]=1,
\quad
\mathbb{E}_{u_j \sim \hat{q}_{k+1/2} (\cdot \mid X_k)}\!\left[\frac{\pi^X(\hat{z}\mid z_\tau) * \mathcal{N}(0,h I_d)(u_j)}{\hat{q}_{k+1/2}(u_j  \mid X_k)}\right]=1,
\end{align*}
and the decomposition $\bigl((\hat{g}-1)+(1-G)\bigr)^2 \le 2(\hat{g}-1)^2 + 2(G-1)^2$, we get
\begin{align*}
\mathbb{E}&_{\mu_0, \hat{q}_{k+1/2}}\!\Bigl[\|\hat{s}_{N}(z_\tau,h-\tau)-s_{h-\tau}(z_\tau)\|^2 \Bigr]
\\
&\le
\frac{2 \kappa^{-2} C_V}{\tau^2}\,
\mathbb{E}_{\mu_0, \hat{q}_{k+1/2}}\!\left[
\int \Bigl((\hat{g}^{k+\frac{1}{2}}_{N}(\hat{z}\mid U)-1)^2 + (G(z_\tau \mid U)-1)^2\Bigr)\,\pi^X(\hat{z}\mid z_\tau)\, \mathrm{d} \hat{z}
\right].
\end{align*}
Standard variance calculations for importance-weighted kernel mixtures yield
\begin{align*}
\mathbb{E}\left[\int (\hat{g}^{k+\frac{1}{2}}_{N}(\hat{z} \mid U)-1)^2 \,\pi^X(\hat{z} \mid z_\tau)\,\mathrm{d} \hat{z} \right]
=
\mathbb{E}_{\mu_0,\rho^X_k, \pi^X(\hat{z}\mid z_\tau)} \! \left[
\frac{1}{N}
\mathrm{Var}_{\hat{q}_{k+1/2}|_{X_k}}\!\Bigl[\tfrac{\mathcal{N}(\hat{z};u_j,h I_d)}{\hat{q}_{k+1/2}(u_j \mid X_k)}\Bigr]
 \right],
\end{align*}
\begin{align*}
\mathbb{E}\left[
\int
(G(z_\tau)-1)^2
\, \pi^X (\hat{z} \mid z_\tau) \, \mathrm{d} \hat{z}
\right]
=
\mathbb{E}_{\mu_0,\rho^X_k, \pi^X(\hat{z} \mid z_\tau)} \! \left[
\frac{1}{N}
\mathrm{Var}_{\hat{q}_{k+1/2}|_{X_k}}\!\Bigl[\tfrac{\pi^X(\hat{z}\mid z_\tau) * \mathcal{N}(0,h I_d)(u_j)}{\hat{q}_{k+1/2}(u_j \mid X_k)}\Bigr]
 \right].
\end{align*}
Each variance is upper bounded by the corresponding $\chi^2$-divergence,
yielding the claim.
\end{proof}

\paragraph{Score estimation error from finite $M$.}
Here we assume
\begin{assumption}
The following fourth moments under $\hat{g}^{k+\frac{1}{2}}_{N}|_U(\cdot\mid z_\tau)$ are finite:
\begin{align*}
\mathbb{E}\left[\exp\bigl(4 f(\hat{z})\bigr)\right] < \infty,
\quad
\mathbb{E}\left[\|\hat{z}-z_\tau\|^4 \exp\bigl(4 f(\hat{z})\bigr)\right] < \infty.
\end{align*}
\label{finite-M-moment}
\end{assumption}
This can be satisfied, for instance, when the variance of each component in the Gaussian mixture $\hat{g}^{k+1/2}_{N}|_U$ is sufficiently small compared to the growth rate of $f$.

\begin{lemma}[finite-$M$ error is $O(1/M)$]
\label{lem:finite-M}
Fix $z_\tau$ and $U \sim_{N} \hat q_{k+\frac{1}{2}}(\cdot\mid X_k)$.
Under \autoref{finite-M-moment},
there exists a constant $C_M$ such that
\begin{align*}
\mathbb{E}_{\mu_0,\hat{g}^{k+1/2}_{N}|_{U}}\!\bigl[\|\hat{s}_{N,M}(z_\tau,h-\tau)-\hat{s}_{N}(z_\tau,h-\tau)\|^2 \,\big|\, U\bigr]
\;\le\;
\frac{C_M}{M\,\tau^2}.
\end{align*}
\end{lemma}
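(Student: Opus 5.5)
The estimator $\hat{s}_{N,M}$ in \eqref{eq:est-score} is a self-normalized importance-sampling (ratio) estimator, so I would prove the lemma with the standard delta-method bound for ratio estimators. Conditionally on $U$ and $z_\tau$, write $w_m \coloneqq \exp(-f(\hat z^{(m)}))$ and $v_m \coloneqq w_m(\hat z^{(m)}-z_\tau)$, with $\hat z^{(m)}$ i.i.d.\ from $\hat g^{k+1/2}_N|_U(\cdot\mid z_\tau)$. Set
\[
A \coloneqq \mathbb{E}[v_m], \qquad B \coloneqq \mathbb{E}[w_m], \qquad \hat A \coloneqq \tfrac1M\sum_m v_m, \qquad \hat B \coloneqq \tfrac1M\sum_m w_m .
\]
Then $\hat{s}_N = A/(\tau B)$ by the expectation form \eqref{eq:surrogate-drift}, and $\hat{s}_{N,M} = \hat A/(\tau \hat B)$.

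The key algebraic step is the identity
\[
\frac{\hat A}{\hat B}-\frac{A}{B} = \frac{1}{B}\Bigl[(\hat A - A) - \frac{\hat A}{\hat B}(\hat B - B)\Bigr].
\]
Since $\hat A/\hat B$ is a convex combination of the vectors $\hat z^{(m)}-z_\tau$, the second bracketed term can be rewritten as
\[
\frac{1}{\hat B}\cdot\frac1M\sum_m w_m(\hat z^{(m)}-z_\tau)\,(\hat B-B),
\]
whose factor $1/\hat B$ is uncontrolled. I would avoid it by rewriting the error as
\[
\frac{\hat A}{\hat B}-\frac{A}{B} = \frac{1}{\hat B}\Bigl[(\hat A - A) - \frac{A}{B}(\hat B-B)\Bigr].
\]
The bracket is then a centred i.i.d.\ average of $\xi_m \coloneqq v_m - (A/B)w_m$. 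Applying Cauchy--Schwarz gives
\[
\mathbb{E}\Bigl\|\frac{\hat A}{\hat B}-\frac{A}{B}\Bigr\|^2 \le \Bigl(\mathbb{E}\|\bar\xi\|^4\Bigr)^{1/2}\Bigl(\mathbb{E}\,\hat B^{-4}\Bigr)^{1/2},
\qquad \bar\xi \coloneqq \tfrac1M\sum_m \xi_m .
\]
For the first factor, the Marcinkiewicz--Zygmund (or Rosenthal) inequality gives $\mathbb{E}\|\bar\xi\|^4 \lesssim M^{-2}\,\mathbb{E}\|\xi_1\|^4$. The fourth moment $\mathbb{E}\|\xi_1\|^4$ is finite because $w_m \le e^{-\inf f}$, which I would take to be finite (bounded-below potential), and because the moment conditions of \autoref{finite-M-moment} control the remaining terms. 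This factor therefore contributes $O(1/M)$.

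The main obstacle is the negative moment $\mathbb{E}\,\hat B^{-4}$. This is where the stated assumption involving $e^{4f}$ enters. By convexity of $x\mapsto x^{-4}$ and Jensen's inequality over the empirical average,
\[
\hat B^{-4} \le \frac1M\sum_m w_m^{-4} = \frac1M\sum_m e^{4f(\hat z^{(m)})},
\]
so $\mathbb{E}\,\hat B^{-4} \le \mathbb{E}[e^{4f(\hat z)}] < \infty$ by \autoref{finite-M-moment}. The second moment condition, on $\|\hat z-z_\tau\|^4 e^{4f}$, would be used if I instead bounded the cross term by splitting $\hat A/\hat B$ directly. Combining the two factors and the prefactor $1/\tau^2$ from the definition of the score gives
\[
\mathbb{E}\bigl[\|\hat{s}_{N,M}-\hat{s}_N\|^2 \mid U\bigr] \le \frac{C_M}{M\tau^2}.
\]
Here $C_M$ collects the Rosenthal constant, $(\mathbb{E}\|\xi_1\|^4)^{1/2}$ and $(\mathbb{E}\,e^{4f})^{1/2}$. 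Finally, I would integrate over $z_\tau\sim\mu_0$, assuming the moment bounds hold uniformly in $z_\tau$ so that $C_M$ can be taken as a supremum.
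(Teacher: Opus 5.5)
Your proposal is correct and follows essentially the paper's route: write $\hat{s}_{N,M}$ as a ratio of empirical means, put the error over the empirical normalizer, bound the negative moment of $\hat B$ by Jensen through $\mathbb{E}[e^{4f}]$, and conclude with Cauchy--Schwarz and an $O(M^{-2})$ fourth-moment bound for centred i.i.d.\ averages; the only difference is that the paper keeps the pieces $(A_M-A)B$ and $A(B-B_M)$ separate and bounds $B_M^{-2}$, where you merge them into the single centred variable $\xi_m$ and bound $\hat B^{-4}$. One correction: finiteness of $\mathbb{E}\|\xi_1\|^4$ comes from your added hypothesis $\inf f>-\infty$ together with the Gaussian-mixture moments, not from the listed Assumption (which involves $e^{+4f}$ and is used only for $\mathbb{E}\,\hat B^{-4}$), and that hypothesis supplies the $e^{-f}$-weighted fourth-moment finiteness which the paper's proof also needs but does not state.
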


\begin{proof}
Write the estimator \eqref{eq:est-score} at time $\tau$ as a ratio of empirical means as follows:
\begin{align*}
\hat{s}_{N,M}(z_\tau,\tau)
=
\frac{A_M}{\tau\,B_M},
\quad
A_M
\coloneqq
\frac{1}{M}\sum_{m=1}^M (\hat{z}^{(m)}-z_\tau)\,e^{-f(\hat{z}^{(m)})},
\quad
B_M
\coloneqq
\frac{1}{M}\sum_{m=1}^M e^{-f(\hat{z}^{(m)})}.
\end{align*}
Let $A\coloneqq \mathbb{E}[A_M]$ and $B\coloneqq \mathbb{E}[B_M]$ under $\hat{z}^{(m)}\stackrel{\text{i.i.d.}}{\sim}\hat{g}^{k+\frac{1}{2}}_{N}|_U(\cdot\mid z_\tau)$.
Then $\hat{s}_{N}(z_\tau,h-\tau)=A/(\tau B)$ and
\begin{align*}
\|\hat{s}_{N,M}-\hat{s}_{N}\|^2
=
\frac{1}{\tau^2}\left\|
\frac{A_M}{B_M}-\frac{A}{B}
\right\|^2
=
\frac{1}{\tau^2}\left\|
\frac{A_M B - A B_M}{B_M B}
\right\|^2.
\end{align*}
Use the decomposition $A_M B - A B_M = (A_M-A)B + A(B-A B_M)$ to obtain the crude bound
\begin{align*}
\left\|
\frac{A_M}{B_M}-\frac{A}{B}
\right\|^2
\le
\frac{2\|A_M-A\|^2}{B_M^2}
+
\frac{2\|A\|^2\,\|B_M-B\|^2}{B_M^2 B^2}.
\end{align*}
Since $e^{-f(\hat{z})}>0$, Jensen implies $B_M^{-2}\le \frac{1}{M}\sum_{m=1}^M e^{2f(\hat{z}^{(m)})}$.
Hence
\begin{align*}
&\|\hat{s}_{N,M}-\hat{s}_{N}\|^2
\\&\qquad\le
\frac{2}{\tau^2}
\left\{
\Bigl(\frac{1}{M}\sum_{m=1}^M e^{2f(\hat{z}^{(m)})}\Bigr)\|A_M-A\|^2
+
\frac{\|A\|^2}{B^2}\Bigl(\frac{1}{M}\sum_{m=1}^M e^{2f(\hat{z}^{(m)})}\Bigr)\|B_M-B\|^2
\right\}.
\end{align*}

Taking conditional expectation with respect to $\hat{Z} |_U \sim \hat{g}^{k+1/2}_{N}|_{U}$ and applying Hölder's inequality with exponents $(2,2)$ to each term, we have
\begin{align*}
&\mathbb{E}\!\left[\|\hat{s}_{N,M}-\hat{s}_{N}\|^2 \,\middle|\, U\right]
\\&\qquad\le
\frac{2}{\tau^2}
\Bigl\{
\bigl(\mathbb{E}[X_M^2]\bigr)^{\frac12}\bigl(\mathbb{E}[\|A_M-A\|^4]\bigr)^{\frac12}
+
\frac{\|A\|^2}{B^2}\,
\bigl(\mathbb{E}[X_M^2]\bigr)^{\frac12}\bigl(\mathbb{E}[(B_M-B)^4]\bigr)^{\frac12}
\Bigr\},
\end{align*}
where $X_M \coloneqq \tfrac{1}{M}\sum_{m=1}^M e^{2f(\hat{z}^{(m)})}$.
We compute the second moment of $X_M$ explicitly:
\begin{align*}
\mathbb{E}\!\left[X_M^2\right]
&=
\frac{1}{M^2} \mathbb{E} \left[ 
\sum_m (e^{2f(\hat{z}^{(m)})})^2 
+ \sum_{m \neq m'} e^{2f(\hat{z}^{(m)})} e^{2f(\hat{z}^{(m')})} 
\right]
\\
&=
\frac{1}{M}\,\mathbb{E}\!\left[e^{4f(\hat{z})}\right]
\;+\;
\frac{M-1}{M}\,\bigl(\mathbb{E}[e^{2f(\hat{z})}]\bigr)^2.
\end{align*}

To analyze the fourth moment of $\sum_{m=1}^M v_m$ where $\mathbb{E}[v_m]=0$, note that
\begin{align*}
\left\|\sum_{m=1}^M v_m\right\|^4
=\Big\langle \sum_i v_i,\sum_j v_j\Big\rangle^2
=\left(\sum_{i,j}\langle v_i,v_j\rangle\right)^2
=\sum_{i,j,k,l}\langle v_i,v_j\rangle\,\langle v_k,v_l\rangle.
\end{align*}
Taking expectations, only the terms in which each index appears an even number
of times contribute, and hence
\begin{align*}
\mathbb{E}\Big\|\sum_{m=1}^M v_m\Big\|^4
&=
M\,\mathbb{E}\|v_1\|^4
+
M(M-1)\big(\mathbb{E}\|v_1\|^2\big)^2
+
2M(M-1)\mathbb{E}\big|\langle v_1,v_2\rangle\big|^2
\\
& \le
M\,\mathbb{E}\|v_1\|^4
+
3M(M-1)\big(\mathbb{E}\|v_1\|^2\big)^2.
\end{align*}

Thus, we obtain

\begin{align*}
&\mathbb{E}\left[\|A_M-A\|^4\right]
\\&\qquad\le
\frac{1}{M^3}\,
\mathbb{E}\left[\bigl\|\bigl((\hat{z}-z_\tau)e^{-f(\hat{z})}-\mathbb{E}[(\hat{z}-z_\tau)e^{-f(\hat{z})}]\bigr)\bigr\|^4\right]
+
\frac{3}{M^2}\,
\mathrm{Var}\left[(\hat{z}-z_\tau)e^{-f(\hat{z})}\right]^2,\\
&\mathbb{E}\left[(B_M-B)^4\right]
\le
\frac{1}{M^3}\,
\mathbb{E}\left[\bigl(e^{-f(\hat{z})}-\mathbb{E}[e^{-f(\hat{z})}]\bigr)^4\right]
+
\frac{3}{M^2}\,
\mathrm{Var}\left[e^{-f(\hat{z})}\right]^2.
\end{align*}

Substituting these bounds back gives
\begin{align*}
\mathbb{E}\!\left[\|\hat{s}_{N,M}-\hat{s}_{N}\|^2 \,\middle|\, U\right]
&\le
\frac{2}{\tau^2}
\Bigl(\mathbb{E}[X_M^2]\Bigr)^{\frac12}
\Biggl\{
\biggl(
\frac{1}{M^3}\mathbb{E}\!\left[\|\cdot\|^4\right]
+\frac{3}{M^2}\mathrm{Var}\!\left[(\hat{z}-z_\tau)e^{-f(\hat{z})}\right]^2
\biggr)^{\!\frac12}
\\
&\qquad
+
\frac{\|A\|^2}{B^2}\,
\biggl(
\frac{1}{M^3}\mathbb{E}\!\left[\bigl(e^{-f(\hat{z})}-\mathbb{E}[e^{-f(\hat{z})}]\bigr)^4\right]
+\frac{3}{M^2}\mathrm{Var}\!\left[e^{-f(\hat{z})}\right]^2
\biggr)^{\!\frac12}
\Biggr\}.
\end{align*}

If the fourth moments are finite under \autoref{finite-M-moment}, $\mathbb{E}[X_M^2]=O(1)$ and each rooted bracket is $O(M^{-1})$.
Therefore,
\begin{align*}
\mathbb{E}\!\left[\|\hat{s}_{N,M}(z_\tau,\tau)-\hat{s}_{N}(z_\tau,\tau)\|^2 \,\middle|\, U\right]
\le
\frac{C_M}{M\,\tau^2},
\end{align*}
for a constant $C_M$ depending only on the above moments.
\end{proof}

Combining \autoref{lem:finite-N} and \autoref{lem:finite-M}, we conclude the overall score estimation error is $O(1/N+1/M)$ under several appropriate assumptions.

\clearpage
\section{Experiment Details}

We provide parameters of each method for our experiments in \autoref{tab:exp-settings-1} and \autoref{tab:exp-settings-2}.
All experiments were conducted on an Intel Xeon CPU Max 9480 without GPU acceleration.

\paragraph{KL divergence estimation for the Gaussian Lasso experiment.}
\label{sec:experiment-details}
Following the setting of \citet{liang2023a}, we first ran the proximal sampler for $100{,}000$ burn-in iterations, and then continued for $400{,}000$ iterations.  
From this long trajectory we randomly collected $1000$ samples to serve as reference particles.  
At each evaluation of the experiment, the KL divergence was estimated using $1000$ current particles and these $1000$ reference particles.  
For estimation we used the $k$-nearest-neighbor estimator~\citep{kozachenko1987sample} implemented in \citet{Buth2025b}, with $k=4$ as recommended by \citet{PhysRevE.69.066138}.  
Although this estimation involves sampling error, we repeated the entire experiment with $10$ different random seeds and, for each seed, computed the KL divergence based on the corresponding set of particles.  
We then reported the mean and variance of these estimates across the $10$ runs to provide a more robust evaluation.

\begin{table}[h]
\centering
\caption{Parameter setting for the experiment in \autoref{sec:gaussian-lasso-exp}.}
\label{tab:exp-settings-1}
\begin{tabular}{l l}
\hline
\textbf{Method} & \textbf{Parameters} \\
\hline
Proximal Sampler with RGO
& Initial distribution: $y^{(j)}_{1/2} \sim \mathcal{N}(0,I_d)$ \\
& Step size: $\eta = 1/135$ \\
& Number of independent chains: $100$ \\
& Thinning: $10$ \\
\hline
Ours
& Initial distribution: $x^{(i)}_0 \sim \mathcal{N}(0,I_d)$ \\
& Step size: $h = 1/10$ \\
& Diffusion steps: $T=10$ \\
& Noise schedule: linear interpolation between $0$ and $h$ \\
& Number of particles: $N=100$ \\
& Number of interim samples: $M=4000$ \\
\hline
Ours without interaction
& Same as \textit{Ours}, except: \\
& \quad Number of particles: $N=1$ \\
& \quad Number of independent chains: $100$ \\
\hline
\end{tabular}
\end{table}

\begin{table}[h]
\centering
\caption{Parameter setting for the experiment in \autoref{sec:uniform-dist-exp}.}
\label{tab:exp-settings-2}
\begin{tabular}{l l}
\hline
\textbf{Method} & \textbf{Parameters} \\
\hline
In-and-Out
& Initial distribution: $x^{(i)}_0 \sim \mathcal{N}(0,I_d)$ \\
& Step size: $h = 1$ \\
& Number of proposals for rejection sampling: $10000$ \\
& (Particles are discarded if not accepted) \\
& Number of independent chains: $1000$ \\
\hline
Ours
& Initial distribution: $x^{(i)}_0 \sim \mathcal{N}(0,I_d)$ \\
& Step size: $h = 1$ \\
& Diffusion steps: $T=10$ \\
& Noise schedule: linear interpolation from $0.01$ to $1$ \\
& Number of particles: $N=1000$ \\
& Number of interim samples: $M=300$ \\
\hline
\end{tabular}
\end{table}

\clearpage

\paragraph{Effect of step size.}
In addition to the default choice $h=1/10$, 
we conducted experiments with other values of $h$ (Figure~\ref{fig:exp-step-size}).  
We observe that larger step sizes accelerate convergence, 
which can be attributed to the heat flow more rapidly bridging the two modes.  
This phenomenon resembles diffusion-based Monte Carlo methods, 
where pushforward dynamics from a Gaussian initialization cover the target distribution.  

\begin{figure}[t]
    \centering
    \includegraphics[width=0.7\linewidth]{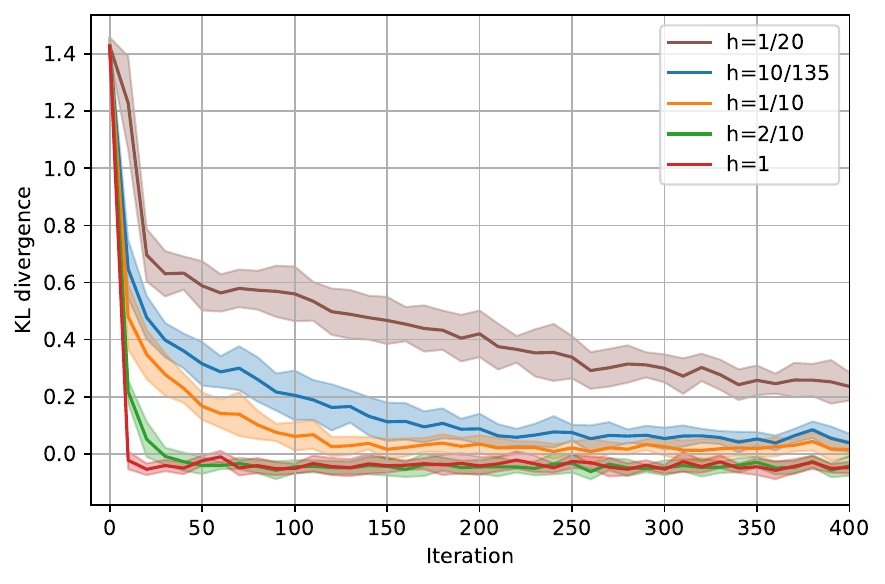}
    \caption{Convergence of KL divergence for different step sizes $h$, 
    with all other parameters set as \emph{Ours} in Table~\ref{tab:exp-settings-1}.  
    Each curve shows the mean over 10 random seeds, with shaded areas indicating variances.  
    Larger step sizes lead to faster convergence.}
    \label{fig:exp-step-size}
\end{figure}

\begin{figure}
    \centering
    \includegraphics[width=0.7\linewidth]{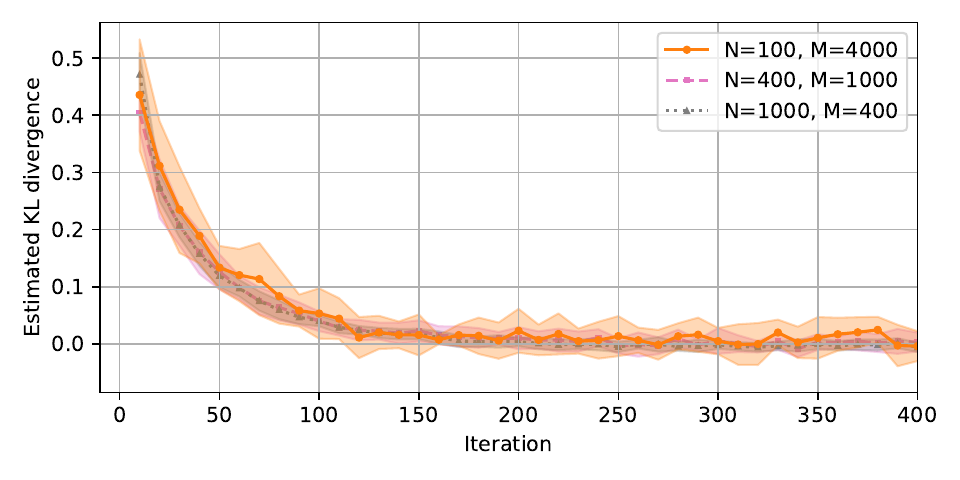}
    \caption{Comparison of hyperparameter tuning while keeping the number of particles used during the algorithm’s execution (i.e., $M \times N$).
    For readability, the first iteration has been omitted.
    While increasing the number of particles N used for approximating the target distribution stabilizes the KL divergence, changing both M and N results in very similar convergence patterns.}
    \label{fig:mn_exclude_0}
\end{figure}

\paragraph{Hyperparameter sensitivity under fixed computational cost.}

Furthermore, \autoref{fig:mn_exclude_0} shows that when the total number of particles used during the algorithm’s execution (i.e., $M \times N$) is fixed, the algorithm exhibits comparable behavior across different choices of $N$ and $M$.
Increasing $N$ improves the approximation of the target distribution, thereby stabilizing the estimated KL divergence.
Conversely, increasing $M$ enhances the Monte Carlo estimation of the diffusion scores.
Thus, under a fixed computational budget, the method remains robust to these hyperparameters.

\end{document}